\newcounter{mycomment}
\setlist[description]{align=parleft,labelindent=0.0ex,labelwidth=7ex}
\newtheorem{Definition}{Definition}
\newtheorem{Theorem}{Theorem}
\DeclareMathOperator{\arccosh}{arccosh}
\DeclareMathOperator{\Ima}{Im}
\newcommand{\m}[1]{\ensuremath{\mathrm {#1}}}
\newcommand*{\dd}{\ensuremath{\mathrm{d}}}
\newcommand*{\ee}{\ensuremath{\mathrm{e}}}
\newcommand*{\ii}{\ensuremath{\mathrm{i}}}
\newcommand*{\ICHAIN}{\ensuremath{(\mathrm{ITD}-\mathrm{IMF})\mathrm{chain}_{1}}}
\begin{document}

\title{Support Spinor Machine}

\author{Kabin Kanjamapornkul}
\email{kabinsky@hotmail.com}
\affiliation{Department of Computer Engineering, Faculty of Engineering, Chulalongkorn University, Pathumwan, Bangkok 10330, Thailand}

\author{Richard Pinčák}
\email{pincak@saske.sk}
\affiliation{Institute of Experimental Physics, Slovak Academy of Sciences, Watsonova 47, 043 53 Košice, Slovak Republic}
\affiliation{Bogoliubov Laboratory of Theoretical Physics, Joint Institute for Nuclear Research, 141980 Dubna, Moscow Region, Russia}

\author{Sanphet Chunithpaisan}
\email{sanphet.c@chula.ac.th}
\affiliation{Department of Computer Engineering, Faculty of Engineering, Chulalongkorn University, Pathumwan, Bangkok 10330, Thailand}

\author{Erik Bartoš}%
\email{erik.bartos@savba.sk}
\affiliation{Institute of Physics, Slovak Academy of Sciences, Dúbravská cesta 9, 845 11 Bratislava, Slovak Republic
}%

\begin{abstract}
We generalize a support vector machine to a support spinor machine by using the mathematical structure of  wedge product over vector machine in order to extend field from vector field to spinor field. The separated hyperplane is extended to Kolmogorov space in time series data which allow us to extend a structure of support vector machine to a support tensor machine and a support tensor machine moduli space. Our performance test on support spinor machine is done over one class classification of end point in physiology state of time series data after empirical mode analysis and compared with support vector machine test. We implement algorithm of support spinor machine by using Holo-Hilbert amplitude modulation for fully nonlinear and nonstationary time series data analysis.
\end{abstract}

\keywords{support spinor machine, support vector machine, time series, moduli state space, holo-Hilbert spectrum}

\maketitle

\section{Introduction}\label{sec:intro}

In present time, most researchers in econometrics \cite{econ} are turning their interests to higher mathematical invariant property so called the Gaussian curvature, especially in a hypothesis test \cite{curvature}. On the other side there exists an active research \cite{pincak21,pincak20} in a machine learning and a support vector machine (SVM) \cite{vapnik} applying the usage of modern mathematical structure behind Riemannian curvature so called curvature tensor and spinor field \cite{cw}. The differential geometry and cohomology theory approach \cite{cohomo} was introduced for the study of a geometry of an arbitrage opportunity as a new quantity in financial market in the modern econometric theory.

Recent studies of a deep learning in convolution network \cite{deep,deep5} and on twin SVM \cite{twin,twin5} are still based on statistical learning theory over Euclidean plane with Hausdorff separation criterion $T_{2}$ over induced underlying topological structure of feature space of classification. Naturally, one can extend a scalar field to a Killing vector field by using the Lie derivative of a tensor field and finally to spinor field and to induce an existences of mathematical structure of Clifford algebras over support spinor machine and artificial neuron network like a Basic Clifford neuron (BCN) \cite{spinor_neuron2,clifford_ann2}. Therefore, theoretically, SVM can be extended to a support tensor machine (STM) \cite{stm,stm2} by using an algebraic operation of free product over equivalent class of quadratic form. The spinor field in the form of Clifford algebra have been directly used \cite{spinor_neuron1} in a neuroscience and cognitive science to improve mathematics in classical model of support vector machine up to so called Clifford fuzzy SVM \cite{clifford_fuzzy}, but with a lack of empirical analysis with financial time series data. 	The financial time series data by nature \cite{empirical} are nonlinear and nonstationary time series data and it is very difficult to classify the future direction up and down for them. Because of only simulation works based on both BCN and Clifford fuzzy SVM have been done, one needs to apply a generalization of a vector field to high dimensional mathematical object so called spinor field with real empirical result on financial time series in order to classify their nonlinear and nonstationary behavior. In algebraic geometry a classification space is studied \cite{teichmuller} by using the invariant property of curvature over Teichmüller space. It is selected as a mathematical object to classify a feature space in the space of time series data. This fact gives us a new theory of quantum probability over hyperbolic number \cite{hyper_number}, and an extradimension approach \cite{pincak1,pincak2,pincak3,pincak4} for time series data prediction.

We observe that there exists some algebraic topological defect in the mathematical structure of machine learning for support vector classification. The separation in support vector machine \cite{Vapnik2} performs the classification based on Euclidean plane. This plane can not be used for the classification of some types of input data in the entanglement state with topological Hopf fibration space underlying the input feature space of time series data in non Euclidean plane. Typically, SVM is based on $T_{2}$ topological space in which distinct points have disjoint neighborhoods. This criterion is too restrictive for SVM and it does not match with nondisjoint feature set of the classification. Many researchers in STM try to solve this classification problem of SVM under $T_{2}$-separation by introducing the tensor product over feature space and kernel of SVM in which they do not know that in fact the space of time series data is in $T_{0}$-separation space. $T_{0}$ space implies $T_{2}$ but $T_{2}$ property can not always imply $T_{0}$-separation in SVM. In order to solve this problem, we introduce a support spinor machine (SSM) and a support Dirac machine (SDM) as a generalization of SVM based on mathematical structure of connection and Riemannian curvature over principle bundle of Kolmogorov space in time series data. We can classify input data with  $T_{0}$-separation axiom of spinor field in time series data over non-Euclidean plane of the classification instead of Haussdorf $T_{2}$-separation axiom in SVM over Euclidean plane. The support spinor machine is a new mathematical object arising from the algebraic topological and machine learning approaches on the level of the invariant property over topological space underlying the observed data, so called moduli space in time series data. SVM poses the intrinsic property of the classification allowing higher dimension of feature space for the classification embedded in the complex projective space with oriented and non-oriented states instead of using pointed space embedded in the real number system.

In SSM we are based on the concept of adaptive data analysis method of Hilbert-Huang transformation \cite{hht} and Holo-Hilbert amplitude modulation \cite{holo} suitable for the empirical analysis of financial time series data induced from adaptive behavior of traders \cite{adaptive}. We can not defined explicit statistical formulae for learning algorithm of SSM. Instead we use data learning themself from the sifting process of moduli state space model. It is implied that we do empirical analysis in more flexible form than BCN and Clifford fuzzy SVM in which it is based on explicit form of data simulation of Clifford algebras. The contribution of SSM over BCN and Clifford fuzzy SVM  lays in fact, that in SSM we use invariant property of toplological space and also second cohomology group of spinor field to analyze data. We use moduli state space to visualized data in SSM which are generalization of data points in BCN and Clifford fuzzy SVM. The separation hyperplane in SSM is based not only on algebraic property of spinor field but also on geometric property of loop space in time series data in which we use invariant property of plane and duality map between second cohomology group as spinor field. Although BCN and Clifford fuzzy SVM also can be applied to real analysis to nonlinear and nonstationary data like SSM if we need to compare the performance to real data set between them and SSM.

In this work, we give the precise definition of moduli state space model in financial time series data and we prove the existence of support spinor field and its application with algebraic topological approach. We verify the existence of spinor field by using mathematical invariant property of second cohomology group and moduli space. To guarantee the usefulness of new SSM model, the empirical analysis have been done over financial time series data. We compare the result of performance of one day ahead directional prediction with support vector machine. We find the equation of moduli state space model over support Dirac machine. In the equation, we have a solution of weight of SSM as a coupling tensor field between two sides of market in the behavior of trader in hidden state space $X_{t}([A_{1}],[A_{2}],[A_{3}])$ ($[A_i]$ are defined in \cite{cohomo}) and in a physiology of time series data. This equation of moduli state space model is an equivalent class over moduli state space $C_{n}(Y_{t}/X_{t})$ between observed state space, $Y_{t}([s_{1}],[s_{2}],[s_{3}],[s_{4}])$ (the definition of physiology state, $[s_{i}]$ can be found in \cite{cw}) and hidden state space, $X_{t}([A_{1}],[A_{2}],[A_{3}])$.

Based on the proposed mathematical motives we state that SSM superiors SVM with nonstationary data analysis \cite{review}. Basically, SVM are used to classifying data with constant time. For time series data, there exists support vector regression machines \cite{srm} based on SVM concept and cannot be used for general time series data.

The advantage of SSM over SVM is that SSM can be useful for classifying future direction of nonlinear \cite{nonlinear} and nonstationary \cite{nontstationary,nontstationary2} time series data, especially financial time series data. SVM can be used only for nonlinear but stationary time series data based on linear algebras of vector space for support vector classification. But SSM are based on a spinor field in which can be linear and nonlinear vector space and Holo-Hilbert spectral analysis \cite{holo} in which can be done fully nonstationary time series analysis. Based on criterion of nature of input data, SSM can be used in more general conditions than SVM. 
 
The paper is organized as follows, in Section \ref{sec:ssm} we prove the existence of the support spinor machine and the moduli state space model. We explain the application of SSM algorithm in the form of classification of point over complex projective plane. We explain the market cocycle in the form of coupling between the tensor field in time series data. We build a new equation for financial market in the form of Killing vector field over connection and we derive the explicit formula of an equivalent class of weight in SSM. In Section \ref{sec:results}, we apply SSM to the directional prediction over six different types of time series data. We measure the performance of the directional prediction in out of sample data. We compare the performance of SSM with SVM and the empirical result of Holo-Hilbert transform amplitude modulation (AM) mode with second layer. In Section \ref{sec:conclusions}, we give the discussion and conclusion for our new model of SSM and the plans for the future work.

\section{Support Spinor Machine}\label{sec:ssm}

\subsection{Moduli State Space Model}

\begin{figure*}[!t]
	\centering
	\includegraphics[width=.8\linewidth]{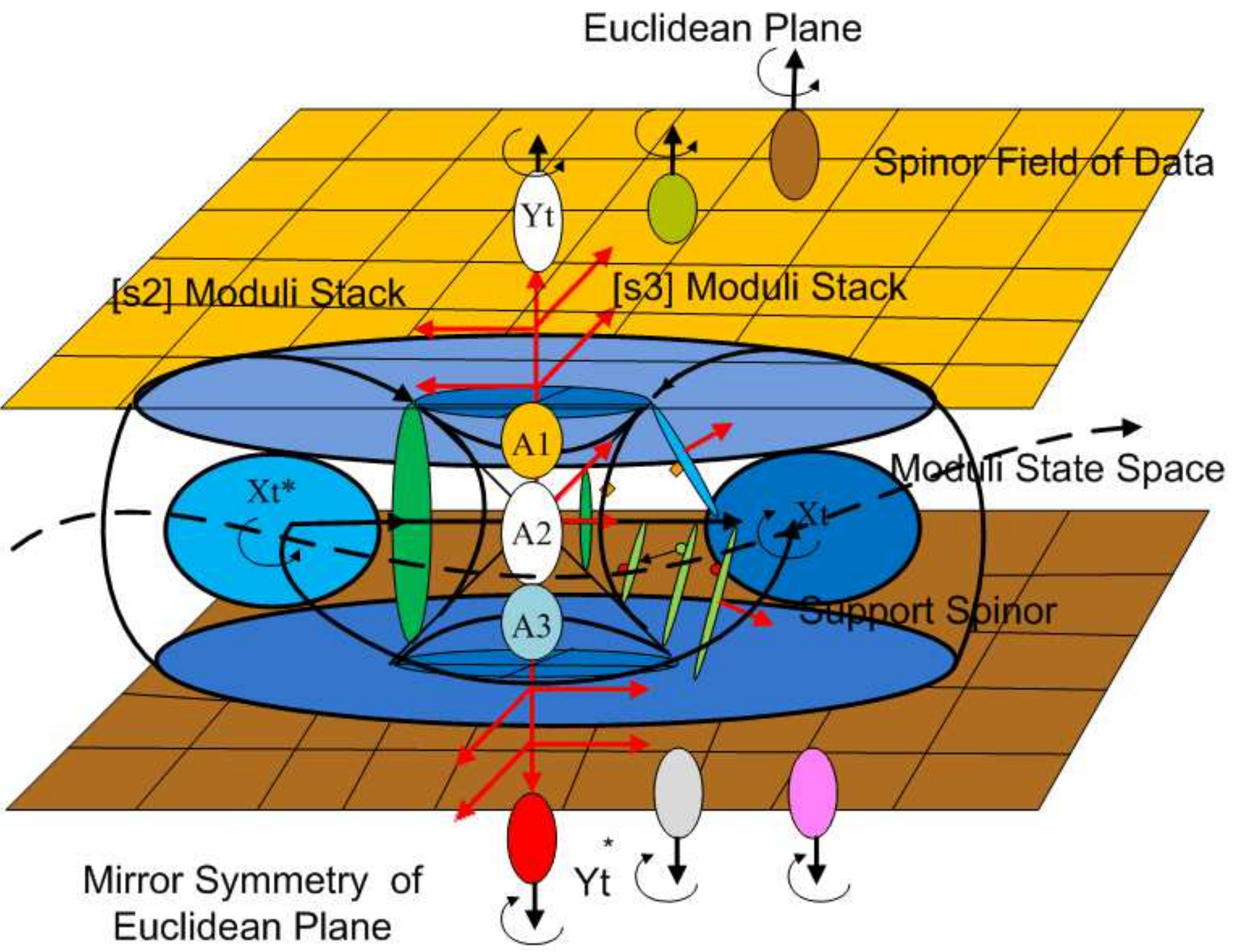}
	\caption{Picture shows the moduli state space model with classical support vector. We use the error of prediction to build a cone of separated hyperplane in quantization field of spinor machine. It is supposed to be Teichmüller space in time series data.\label{ssm}}
\end{figure*}

In $T_{0}$-separation criterion of time series data, we obtain a loop space in time series data. In this section we want to introduce standard classification of topological space by using genus of Riemann surface so called Teichmüller space in time series data. Each data of classification we associate with a genus of Riemann surface of data in the form of moduli stack of time series data analogue with moduli stack $\mathcal{M}_{g}$ in algebraic geometry (Fig. \ref{ssm}).  We use Riemann surface $S^{2}\sim \mathbb{C}P^{1}$, where $\sim$ is the homotopy equivalent, as the surface of principle bundle for embedding data of classification with genus zeros, non-Eculidean plane in support spinor machine. The algorithm for embedding data to Riemann surface with genus $g$ as a toric model of moduli space $\mathcal{M}_{g}$ of data so called Teichmüller map of time series data $x_{t}\in X_{t}=\mathcal{M}_{g}$ is,
\begin{equation*}
\gamma:[0,1]\rightarrow \mathcal{M}_{g},\quad \gamma(0)=x_{t},\quad \gamma(1)=\m{Id}.
\end{equation*}
Given a time series data as a sequence of observation data $x_{1}\rightarrow x_{2}\rightarrow \cdots \rightarrow x_{n}$, we induce a sequence of embedded Teichmüller spaces in time series data as following
\begin{equation}
\begin{CD}
x_{1} @>>> x_{2} @>>> \cdots@>>>x_{n} \\
@VVV @VVV @VVV @VVV\\
\mathcal{M}_{g=1} @>{\text{d}}>> \mathcal{M}_{g=2} @>{\text{d}}  >>  \cdots@>{\text{d}}>> \mathcal{M}_{g=n}
\end{CD}
\end{equation}
where $\mathcal{M}_{g=n}$ is the moduli state space of time series data which is isomorphic to Riemann surface of data with $n$-hole (genus $n$).

\begin{Definition}
Let $X_{t}$ be a Kolmogorov space in time series data of input financial time series data for the classification, $X_{t}=X_{t}([A_{i}])$, $i=1,\dots 3$, where $[A_{i}]$ is an equivalent class of the behavior of trader. We have $2^{3}=8$  hidden market states as all possibilities (discrete topology of categories of coordinate of $[A_{i}]$) of transition between all hidden states.
\end{Definition}

\begin{Definition}
Let $Y_{t}$ be a Kolmogorov space in time series data of input financial time series data for the classification, $Y_{t}=Y_{t}([s_{i}])$, $i=1,\dots,4$, where $[s_i]$ is a physiology of time series data in observation or classification space.
\end{Definition}

\begin{Definition}
Let $C_{n}(X_{t})$ be a chain complex of state space and $C_{n}(Y_{t})$ be a $n$-chain complex of observation space. We have a differential of chain $C_{n}(Y_{t})$ and cochain $C^{n}(Y_{t})$ between state and observation complex. We define a relative complex $C_{n}(Y_{t}/X_{t}):=C_{n}(Y_{t})/C_{n}(X_{t})$. The cohomology group for moduli state space model with equivalent class of moduli regression coefficient $[\beta]$ is defined for $n=0,1,2,\dots$ by
\begin{gather*}
[\beta_{n}]\in  H_{n}(Y_{t}/X_{t})=\ker \partial_{n-1}C_{n-1}(Y_{t})/\Ima \partial_{n}C_{n}(Y_{t}).
\end{gather*}
\end{Definition}
%

\begin{Definition}
A equivalent path $[\alpha_{n}]$ in support spinor machine (SSM) is an equivalent class of degree of map between homotopy class of Techm\"{u}ller space in time series data $\mathcal{M}_{g}$ with genus $g$ to classifying space $Y_{t}=\mathbb{Z}_{2}$
\begin{equation}
[\gamma_{i}]\in[\mathcal{M}_{g} , \mathbb{Z}_{2}]:=[\mathcal{M}_{g},Y_{t}]
\end{equation}
with $[\gamma_{i}]=[(e^{i\theta},e^{i\beta})]\mapsto (\alpha^{\ast},\beta^{\ast})\mapsto \mathbb{Z}_{2}. $
\end{Definition}

\begin{Definition}
A Holo-Hilbert functor in time series data is an equivalence between an functor map of homotopy class in higher dimension of $n$-sphere defined by
\begin{multline}
\m{Holo}:[S^{0},-]\rightarrow [S^{1},-]\\ 
\rightarrow [S^{2},-]\rightarrow \cdots [S^{n},-]\rightarrow   \cdots 
\end{multline}
with
\begin{multline*}
\m{Holo}(X_{t})=[S^{0},X_{t}]\rightarrow [S^{1},X_{t}]\\\rightarrow [S^{2},X_{t}]\rightarrow \cdots [S^{n},X_{t}]\rightarrow  \cdots
\end{multline*}
\end{Definition}

\begin{Definition}
A loop space in time series data is an observation state space of a physiology of time series data defined by $Y_{t}:= \pi_{1}(  \mathbb{R}^{n+1}-X_{t} )$. We define a spinor field in time series of observation data by equivalent class of duality map between  $y_{t}\in \pi_{1}(\mathbb{R}^{n+1}-\{x_{t}\}):=Y_{t}$, $x_{t}\in X_{t}$ be a classifying observation data in observation state space, $y_{t}\in Y_{t}$
\begin{multline}
[w]:\pi_{1}(  \mathbb{R}^{n+1}-X_{t} ):=Y_{t}\rightarrow H_{1}(X_{t})\\ \rightarrow  H_{2}(X_{t})\rightarrow H^{2}(X_{t}).
\end{multline}
\end{Definition}

\begin{Definition}
We define an equilibrium state in moduli state space model by an isotopy group under group operation of $G=PSL(2,\mathbb{C})$,
\begin{equation}
G\times X_{t}\rightarrow X_{t}
\end{equation}
Let $Y_{t}$ be a time series of measurement financial data induced from coupling between equivalent class of behavior of trader $[A_{i}]$ $(i=1,2,3)$. Consider $y_{t}\in Y_{t}$ with $y_{t}=G([A_{1},A_{2},A_{3}]).$ The moduli state space model is a moduli of classical  ARIMA model in time series analysis with coeficient as parameter in quantization state of spinor field $ a,b,c,d,\alpha \in \mathbb{H}$ into  a modified diophantine equation \cite{diophan} (just borrowing the notation with different algebraic operation in new definition) as following
\begin{align}
a x_{t} &\equiv 1 \quad mod \quad [s_{1}]
\\
b x_{t} &\equiv i \quad  mod \quad [s_{2}]
\\
c x_{t} &\equiv -1 \quad  mod \quad[s_{3}]
\\
d x_{t} &\equiv -i \quad  mod \quad[s_{4}]
\\
\alpha y_{t} &\equiv x_{t}([s_{i}],A[i])  \quad mod \quad [A_{i}].
\end{align}
We have
\begin{align}
a x_{t} - [s_{1}] x_{t}^{\ast} &= 1  \\
b x_{t} - [s_{2}] x_{t}^{\ast} &= i  \\
c x_{t} - [s_{3}] x_{t}^{\ast} &= 1  \\
d x_{t} - [s_{4}] x_{t}^{\ast} &= -i  
\end{align}
where $x_{t}^{\ast}$ is a hidden state of time series data.
\end{Definition}
 
\subsection{Proof of $T_{0}$-separation criterion for SSM}

The source of SVM is a second cohomology group of the classification of a plane by using a loop space in time series data. Let $X_{t}$, $Y_{t}$ be a topological space of state and observation time series data with $Y_{t}=\pi_{1}(\mathbb{R}^{n+1}-X_{t})$. Let $x_{1},x_{2}\in X_{t}$, $[-1]\in\pi_{1}(\mathbb{R}^{2}-\{x_{1}\})$ and $[1]\in\pi_{1}(\mathbb{R}^{2}-\{x_{2}\})$
\begin{equation}
\pi_{1}(\mathbb{R}^{2}-X_{t})\rightarrow  H_{1}(X_{t}) \rightarrow  H_{2}(X_{t}).
\end{equation}

\begin{Definition}
$<X_{t},\mathcal{O}>$ is a $T_{0}$-space if and only if $x,y\in S$ with $y\neq x$ implies that $\exists U,V\in \tau$ with $x\in U,y\in S-U$ and $y\in V,x\in S-V$. $T_{0}$-spaces are also called Kolmogorov spaces.
\end{Definition}

\begin{Definition}
$<X_{t},\mathcal{O}>$ is a $T_{2}$-space if and only if $x,y\in S$ with $y\neq x$ implies that $\exists U,V\in \tau$ with $x\in U,y\in V$ and $U\cap V=\phi$. $T_{2}$-spaces are also called Hausdorff spaces.
\end{Definition}

\begin{figure}[!tb]
	\centering
	\includegraphics[width=\linewidth]{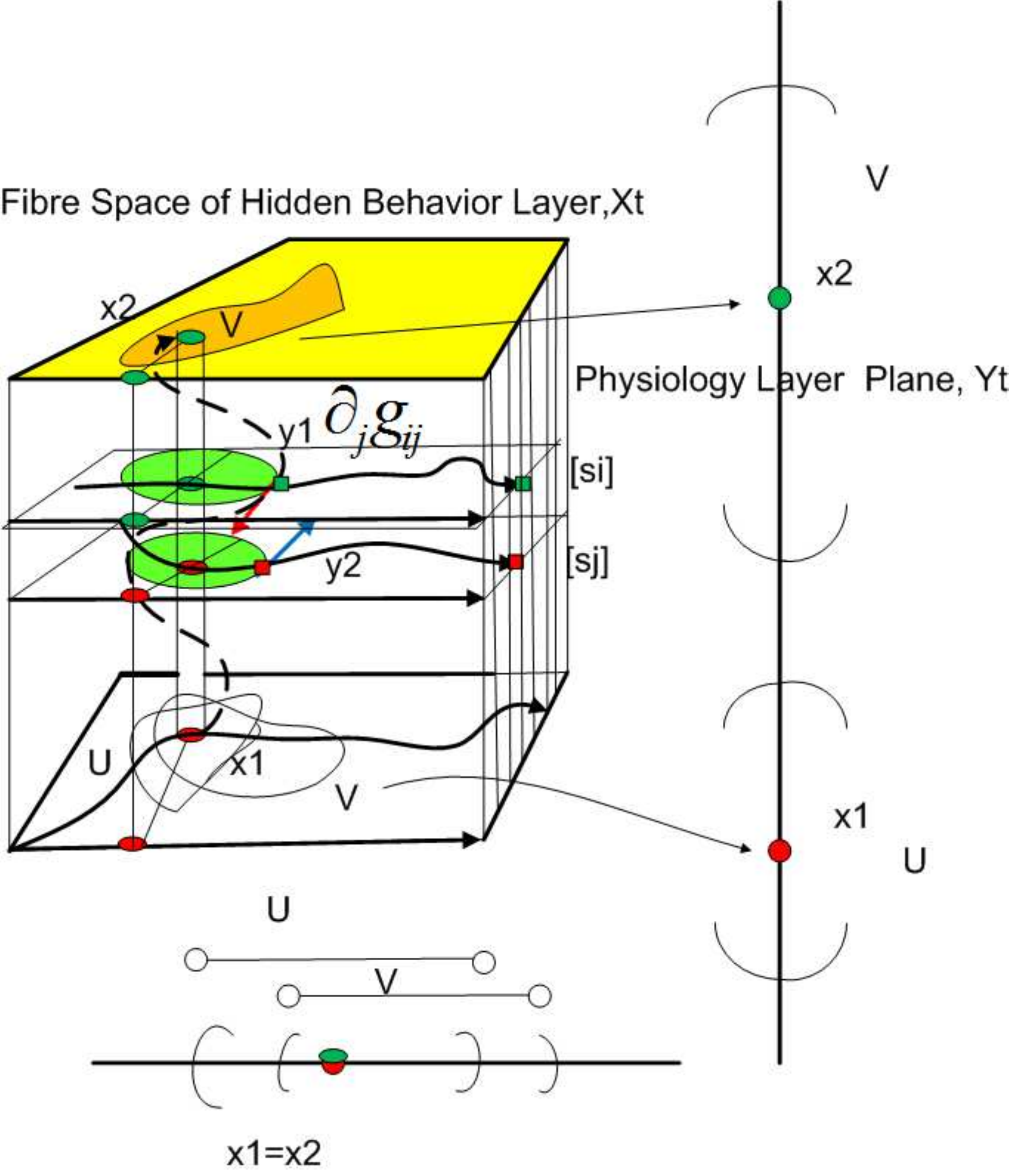}
	\caption{The projection of similar input data to real number line. We lift the path of data to fibre space, it is the source of curvature in time series data. The curvature blend a real line into curve with operator of Riemannian tensor field $\partial_{k} g_{ij}$. The side of cute in this plot is moduli state space model of physiology of time series data. This space is an observation space of time series data. Its source is coming from the coupling of behavior of trader in the hidden state space $X_{t}([A_{i}])$ layers. In SVM, there exists an algebraic defect over Euclidean plane of separation. This situation happens when input feature space contains the same data with $x_{1}=x_{2}\in U,V\subset \mathbb{R}^{n},$ but $(x_{1},y_{1})\neq (x_{2},y_{2})$. In SVM, we use $T_{0}$-separation axiom to solve this problem with the lift path of $x_{1}$, $x_{2}$ to the fibre space with homotopy path. The separation is a diffeomorphic map in the tangent of manifold of Kolmogorov space in time series data. \label{proof}}
\end{figure}

\begin{Theorem}
Let $(x_{1},y_{1})$ and $(x_{2},y_{2})$ be input space and classifying space for a support vector machine in the Euclidean plane such that $x_{1}=x_{2}$  and $y_{1}\neq y_{2}, x_{1},x_{2}\in \mathbb{R}^{n}, y_{1},y_{2}\in \mathbb{Z}_{2}$. We can separate a non-separation data in $T_{2}$-separation with a disjoint neighborhood in a moduli state space in time series data with $T_{0}$-separation of SSM.
\end{Theorem}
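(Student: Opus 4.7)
The plan is to exploit the gap between $T_2$ and $T_0$ separation that is highlighted in Fig.~\ref{proof} and the preceding definitions. The starting observation is that if $x_1=x_2$ as points of $\mathbb{R}^n$ but $y_1\neq y_2$, then no open sets $U,V\subset\mathbb{R}^n$ can satisfy $x_1\in U$, $x_2\in V$ with $U\cap V=\varnothing$, because any neighborhood of $x_1$ automatically contains $x_2$. Hence the Hausdorff axiom is obstructed at the level of the raw feature space, and a separating hyperplane for the classical SVM cannot exist. So the first step is to record this obstruction cleanly: the two labeled examples collapse to a single point of $\mathbb{R}^n$, and $T_2$-separation would require $x_1\neq x_2$ as a set-theoretic hypothesis, which fails.

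Next I would lift the data into the moduli state space using the Teichm\"{u}ller embedding $\gamma:[0,1]\to\mathcal{M}_g$ defined earlier and the chain of maps $\mathcal{M}_{g=1}\xrightarrow{d}\mathcal{M}_{g=2}\xrightarrow{d}\cdots\xrightarrow{d}\mathcal{M}_{g=n}$ indexed by time. Because the two examples are drawn from a time series, the coincident feature vectors $x_1=x_2$ are observed at distinct times $t_1\neq t_2$, and hence lift to distinct fibres $\mathcal{M}_{g=t_1}$ and $\mathcal{M}_{g=t_2}$ of different genus. Equivalently, through the hidden-state coordinates $X_t([A_1],[A_2],[A_3])$ the two points acquire distinct representatives under the $PSL(2,\mathbb{C})$ action of the equilibrium definition, so the lift $\widetilde x_1\neq \widetilde x_2$ as elements of the lifted Kolmogorov space $X_t$. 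This is the essential geometric move: a real-line projection that identifies $x_1$ with $x_2$ is replaced by a bundle projection whose homotopy-lifted paths distinguish them.

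Having produced $\widetilde x_1\neq \widetilde x_2$ in $X_t$, I would then verify the $T_0$-axiom for the lifted pair. For this I pick open neighborhoods in the loop-space topology of $Y_t=\pi_1(\mathbb{R}^{n+1}-X_t)$, using the duality $[w]:Y_t\to H_1(X_t)\to H_2(X_t)\to H^2(X_t)$ to transport neighborhoods along cohomology classes. Concretely, take $U$ to be the open saturation of a loop class separating $\widetilde x_1$ from $\widetilde x_2$, and $V$ the analogous set obtained from the opposite cohomology class; the two loops correspond to $[1]\in \pi_1(\mathbb{R}^2-\{x_2\})$ and $[-1]\in\pi_1(\mathbb{R}^2-\{x_1\})$ of the earlier display. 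Then $\widetilde x_1\in U$, $\widetilde x_2\in X_t\setminus U$ and $\widetilde x_2\in V$, $\widetilde x_1\in X_t\setminus V$, which is precisely the $T_0$ condition of the Kolmogorov definition, without requiring $U\cap V=\varnothing$. In the language of the SSM this realizes the separating hyperplane as a diffeomorphic cut in the tangent bundle of the moduli space rather than as a Euclidean hyperplane.

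The main obstacle I expect is the second step: showing that the lift $\widetilde x_1,\widetilde x_2$ is genuinely well-defined and inequivalent in $X_t$. It is tempting to simply assert $t_1\neq t_2$, but one must explain why the moduli functor $\mathrm{Holo}$ separates the two fibres, i.e. why the homotopy classes $[\gamma_i]\in[\mathcal{M}_g,\mathbb{Z}_2]$ actually distinguish the two labeled observations rather than collapsing them again through the degree map to $\mathbb{Z}_2$. I would handle this by invoking the cohomology class $[\beta_n]\in H_n(Y_t/X_t)$ of the Definition above, arguing that if the lifts coincided then $[\beta_n]$ would vanish on the relative chain $C_n(Y_t/X_t)$, contradicting $y_1\neq y_2$. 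Once this non-collapse is established, the $T_0$-separation assembles routinely from the definitions.
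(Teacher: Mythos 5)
Your first paragraph is, in substance, the whole of the paper's own proof of this theorem: the paper likewise establishes only the negative half, showing by an $\epsilon$-ball construction around $x_i=b/w$ that any candidate neighborhoods $U,V$ of $x_1=x_2$ necessarily satisfy $U\cap V\neq\emptyset$, contradicting the Hausdorff requirement, and then closes with a parenthetical remark deferring the positive $T_0$-separation claim to the following theorem. Your version of the obstruction (a point cannot be separated from itself by disjoint neighborhoods) is the same fact stated more directly. Your third paragraph then essentially reproduces that following theorem's argument: assign to the two labels the loop classes $y_1=[e^{2i\pi}]=[1]$ and $y_2=[e^{i\pi}]=[-1]$ in $\pi_1(\mathbb{R}^{n+1}-\,\cdot\,)$ and separate the lifted points by an angular cut $\theta^{\ast}$ with $\theta_1<\theta^{\ast}<\theta_2$. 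So you prove more than the paper does under this theorem heading, but by essentially the same route.

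The one place you genuinely diverge is the lifting step, and that is also where the gap sits. You distinguish the lifts $\widetilde x_1\neq\widetilde x_2$ by asserting that the coincident feature vectors are observed at distinct times $t_1\neq t_2$ and hence land in distinct fibres $\mathcal{M}_{g=t_1}$, $\mathcal{M}_{g=t_2}$ of different genus. That hypothesis appears nowhere in the theorem statement, which gives only $x_1=x_2$ and $y_1\neq y_2$, and the paper does not invoke it: its lift is indexed by the labels themselves, i.e.\ the two points are sent to $e^{i\pi}$ and $e^{2i\pi}$ on $S^1$ \emph{because} $y_1\neq y_2$, not because they occur at different times. Your proposed repair --- that coincident lifts would force $[\beta_n]\in H_n(Y_t/X_t)$ to vanish, contradicting $y_1\neq y_2$ --- is the right instinct but remains an assertion; to close it you would need to exhibit the map sending a label to a relative class and show it is injective on $\{+1,-1\}$, which is exactly what the paper's explicit assignment $[-1]\mapsto e^{i\pi}$, $[1]\mapsto e^{2i\pi}$ does by fiat. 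Replacing the time-index argument by that label-indexed lift makes your proof coincide with the paper's two-theorem argument.
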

\begin{proof}
Let $x_{1},x_{2}\in \mathbb{R}^{n}$, with $x_{1}=x_{2}$ and $y_{1}\neq y_{2}\in  \mathbb{Z}_{2}=\{ -1,1 \}$, if $y_{1}=1$ we imply $y_{2}=-1$. For SVM, there exists some defect on algebraic construction over real line on classification of input data $(x_{i},y_{i})$, which can not be used when two input data have similar value with different output of classification $y_{i}\in \{+1,-1\}$. We suppose that the training data of $n$-samples are recorded and visualized as embedded pointed space $(x_{1},y_{1}),\cdots (x_{n},y_{n})$, $x\in \mathbb{R}^{n+1}$, $y\in \{+1,-1\}$ in the Euclidean space with hyperplane decision function
\begin{align}
<w,x>-b\geq +\triangle, \quad y=+1,\\
<w,x>-b\leq -\triangle, \quad y=-1,
\end{align}
where $w$ is normal vector to the hyperplane. In the Euclidean plane over real number field, there exists the case when the input space $x_{i}\in \mathbb{R}$ is not compact and complete space, e.~g., the space contains only two disjoint similar points with points defined to infinity $\ast=\{\infty\}$. Therefore one cannot use the compact equation because the value of classification ground field has the same algebraic structure with inner product $<w,x>$, $y\in \mathbb{R}$. This problem is well known in the functional analysis and it is so called ill-posed problem or regularization problem. Therefore we get the equation
\begin{equation}
y<w,x>-b\leq \triangle,
\end{equation}
which is not hold for SVM in the cases of ill-posed problem over classification data.
  
The spinor field of time series data model is a hidden stats of classical time series model throughout a quantization state of time series data. The quantization state of time series data can be explained by the concept of orientation and non-orientation state in time series data in which it is deep connected to the concept of tanglement and entanglement states in the quantum mechanics, in some situations we can resolve this problem over principle bundle theory.

Let us consider a neighborhood of classification data over input space $X=U\cup V$, with $U\cap V=\phi$. Let $U_{\epsilon}(x\in \mathbb{R}^{n};x_{i})$ defined by $U_{\epsilon}(x;x_{i})=\{x| ||x-x_{i}||<\epsilon  \}.$ Choose a radius ball $\epsilon$ such that  $wx-b<\epsilon <1$ then set $\epsilon'=\frac{\epsilon}{w}$ such that $x-\frac{b}{w}<\frac{\epsilon}{w}<1$, set $x_{i}=\frac{b}{w}$. We have $U_{\epsilon}(x;x_{i})=\{x| ||x-x_{i}||<\epsilon<1  \}$. Let $x_{1}\in U_{\epsilon}(x;x_{i})=:U$. It implies that $x_{2}\in U_{\epsilon}(x;x_{i}):=V\subset \mathbb{R}^{n}$. Therefore $U\cap V\neq \phi$. It is in the contradiction with the assumption above. Therefore, we can not use $T_{2}$-separation in the case of such algebraic defect over SVM. (We want to proof that in SSM the covering spaces of $X_{1}$ and $X_{2}$ are separated with $T_{0}$-separation axiom.)
\end{proof}

Let $x_{1},x_{2} \in \mathbb{R}^{n}$. It is enough to perform the proof for the case $n=2$. Let $T^{2}=\mathbb{R}^{2}/\mathbb{Z}$ be a torus with genus $g=1$. If we consider $\mathbb{R}^{2}-\{x_{1}\}$, this space is homotopy equivalent to $S^{1}$. We use the fact that $S^{2}=S^{1}\vee S^{1}\sim \mathbb{C}P^{1}$ where $\vee$ is a smash product in the algebraic topology and $\sim$ is a homotopy equivalence operation, the complex projective space $\mathbb{C}P^{1}$ is a principle bundle. Let define a lift path from $\mathbb{R}^{2}\rightarrow T^{2}$.

\begin{Theorem}
SVM cannot use $T_{2}$-separation axiom to separate similar values of input time series data over the Euclidean plane. We will prove that if the input feature space is extended to the non-Euclidean plane of a equivalent class of degree of map in the loop space in time series data then we can use $T_{0}$-separation axiom to separate the non-separated similar values of input data in SVM. The algebraic defect of SVM will be repaired by SSM with input feature space in the form of a equivalent class of degree of loop space in time series data $x_{1}:= [\theta_{1}]\in \pi_{1}(X_{t},x_{1})$, $x_{2}\:= [\theta_{2}]\in \pi_{1}(X_{t},x_{2})$ with $[\theta_{1}]=[\theta_{2}]$ but $\theta_{1}\neq \theta_{2}$.
\end{Theorem}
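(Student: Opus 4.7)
The plan is to lift the collision data $(x_{1},y_{1}),(x_{2},y_{2})$ with $x_{1}=x_{2}$ from the previous theorem along the covering $p:\mathbb{R}^{2}\to T^{2}=\mathbb{R}^{2}/\mathbb{Z}$ introduced immediately above, and to identify each sample with a representative loop in $\pi_{1}(\mathbb{R}^{2}-\{x_{i}\},\ast)\cong H_{1}(X_{t})\cong \mathbb{Z}$. Because $\mathbb{R}^{2}-\{x_{i}\}$ is homotopy equivalent to $S^{1}$ and $S^{1}\vee S^{1}\sim \mathbb{C}P^{1}$ carries the principle bundle structure invoked just before the statement, I would pick representatives $\theta_{i}$ whose orientation class in the fibre encodes the classifier value $y_{i}\in\{-1,+1\}$. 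Equal winding numbers give $[\theta_{1}]=[\theta_{2}]$ at the homological level, while the opposite fibre orientations give $\theta_{1}\neq \theta_{2}$ as points of the loop space.

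Next I would verify the Kolmogorov axiom directly in this loop space. Given $\theta_{1}\neq \theta_{2}$, I would pick a parameter $t^{\ast}\in(0,1)$ at which the two evaluations disagree in the covering $T^{2}$ and take $U,V$ to be the preimages under evaluation-at-$t^{\ast}$ of small balls around $\theta_{1}(t^{\ast})$ and $\theta_{2}(t^{\ast})$, chosen so that $\theta_{1}\in U$, $\theta_{2}\in V$, $\theta_{1}\notin V$, $\theta_{2}\notin U$. The crucial point is that these neighbourhoods need not satisfy $U\cap V=\phi$, so the Hausdorff contradiction derived in the previous theorem via $\epsilon$-balls no longer applies; only the weaker Kolmogorov condition is asserted, and it is consistent precisely because the lift separates the collision in the fibre even when it is invisible to the base.

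With $T_{0}$-separation secured I would recast the SVM decision rule as a cocycle condition in $H^{2}(X_{t})$ by composing the duality map $[w]\colon\pi_{1}(\mathbb{R}^{n+1}-X_{t})\to H_{1}(X_{t})\to H_{2}(X_{t})\to H^{2}(X_{t})$ with the Teichmüller embedding $\gamma:[0,1]\to \mathcal{M}_{g}$. Under this translation the Euclidean inequality $y\langle w,x\rangle -b\leq \triangle$ becomes a statement about the pairing $\langle [w],[\theta_{i}]\rangle$ of the weight cocycle against the degree-of-map class $[\gamma_{i}]\in[\mathcal{M}_{g},\mathbb{Z}_{2}]$, yielding a separating hyperplane in the spinor bundle even in the problematic case $x_{1}=x_{2}$.

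The hard part will be to verify that the pairing so constructed is well-defined on homotopy classes rather than on representative loops, i.e.\ invariant both under reparametrisation and under the $G=PSL(2,\mathbb{C})$ action entering the equilibrium-state definition, and also that it degenerates to the classical inner product on sufficiently regular data so that SSM genuinely extends SVM rather than replacing it. Once this compatibility is in place, the algebraic defect of the previous theorem is resolved: the obstruction was caused by forcing the decision to live in the Euclidean base, whereas SSM lifts it to the loop space and exploits the orientation/non-orientation bifurcation of the spinor field to distinguish $\theta_{1}$ from $\theta_{2}$.
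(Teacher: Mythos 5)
Your proposal follows the same core route as the paper: both arguments lift the colliding inputs $x_{1}=x_{2}$ out of the Euclidean base into the punctured plane $\mathbb{R}^{2}-\{x_{i}\}\sim S^{1}$, encode the two class labels as the homotopy representatives $e^{\ii\pi}=[-1]$ and $e^{2\ii\pi}=[+1]$, and observe that the collision becomes a pair of distinct angles $\theta_{1}=\pi\neq 2\pi=\theta_{2}$ in the fibre even though it is invisible in the base. Where you diverge is in how the separation is finished. The paper's proof is more direct and more concrete: it writes down the homotopy class $[\alpha]:U\coprod V\rightarrow S^{1}$ with $(x_{1},0)\mapsto e^{\ii\pi}$, $(x_{2},1)\mapsto e^{2\ii\pi}$, and then simply exhibits an intermediate angle $\theta^{\ast}$ so that the line through $\theta^{\ast}$ cuts the unit circle between the two classes; nothing more is needed. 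You instead (a) build candidate $T_{0}$-neighbourhoods as preimages of balls under evaluation-at-$t^{\ast}$, which is a reasonable and arguably cleaner way to check the Kolmogorov axiom than the paper's bare assertion, and (b) recast the decision rule as a pairing $\langle[w],[\theta_{i}]\rangle$ in $H^{2}(X_{t})$. Step (b) imports material that in the paper belongs to the later existence-of-SSM theorem (the weight class $[w]$ and the De Rham integral), and you correctly identify that its well-definedness on homotopy classes and its invariance under the $PSL(2,\mathbb{C})$ action are unverified — but you then leave that verification open. Relative to the paper this is extra, unclosed machinery: the paper never needs the pairing to prove this theorem, because exhibiting $\theta^{\ast}$ with $\theta_{1}$ and $\theta_{2}$ on opposite sides already completes the separation. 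If you drop the cohomological pairing (or defer it to the weight-equation theorem where it actually lives) and close with an explicit separating angle on $S^{1}$, your argument matches the paper's and is complete; as written, the final step rests on a compatibility claim you have named but not proved.
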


\begin{proof}
Let $(x_{i},y_{i})$ be a feature space of SSM with $x_{1}\in U\subset X_{t}$, $x_{2}\in V\subset X_{t}$ with $U\cap V=\phi$. Let $y_{1}=[e^{2i\pi}]=[1]\in \pi_{1}(\mathbb{R}^{n+1}-U)=H^{1}(U)$, $y_{2}=[e^{i\pi}]=[-1]\in \pi_{1}(\mathbb{R}^{n+1}-V)=H^{1}(V)$, $n=1$. We assume that the input $\{x_{i}\}$ is a pointed space. The point $x_{i}$ is contractible from compact feature space of data $U,V\sim \ast$. Let a deform plane of the classification for SSM be a Kolmogorov space in time series data $Y_{t}$. When we delete one point $x_{i}$ from the plane of classification $y_{i}\in \mathbb{R}^{n+1}$ we induce a vector field on unit cycle of the classification space $y_{i}\in S^{n}$ around input data $x_{i}$ since $\mathbb{R}^{n+1}-\{x_{i}\}\sim S^{n}$.

Let $x_{1}=x_{2}$ with $(x_{1},y_{1})\neq (x_{2},y_{2})\in U\cap V$. We want to prove that $x_{1}\in U$ and $x_{2}\not\in V$ or $x_{1}\not\in U$ and $x_{2}\in V$ over the fibre space by using the homotopy lift path (see Fig.~\ref{proof} for more details). We want to remark that if $U,V\subset  \mathbb{R}^{n}$ or the Euclidean flat plane, we cannot separate $x_{1}$ from $x_{2}$ as we have demonstrated in the previous section. But if $U$, $V$ are subset of the non-Euclidean plane we can separate them by using homotopy to deform the Kolmogorov space of time series data. The defintion of a homotopy map in this prove is used to glue the deleted points from the  classification space to Riemann sphere $S^{2}\sim\mathbb{C}P^{1}$  
\begin{equation}
H:\mathbb{R}^{3}-\{x_{i}\}\times I \rightarrow S^{2}.
\end{equation}
In the second part of proof we let $x_{1}\in U, x_{2}\in V$ with $X_{t}=U\coprod_{\alpha} V$ and the homotopy class above is defined by 
\begin{gather}
[\alpha]:U\coprod V\rightarrow S^{1},\\ \nonumber (x_{1},0)\mapsto [-1]=e^{i\pi}\in S^{1},(x_{2},1)\mapsto [1]=e^{i2\pi}\in S^{1}. 
\end{gather}
We let $\theta_{1}=\pi$, $\theta_{2}=2\pi$, it is clear that $\theta_{1}\neq \theta_{2}$. There exists the optimal degree $\theta^{\ast}$ such that the line passing $\theta^{\ast}$ cuts unit cycle and separates $\theta_{1}=\pi$ and $\theta_{2}=2\pi$ from each other. We have chosen the separate plane over the complex projective plane with  $[\theta^{\ast}]=\frac{3\pi}{4}$, so we have  $\theta_{1}<\theta^{\ast} < \theta_{2}$. 
\end{proof}   

The source of the spinor field in time series data induces from the behavior of trader for the expectation on next period of the physiology of time series data state $[s_{i}]_{i+1}:=[s^{\ast}_{i}]$ in financial time series data. There exists many possibilities in the expected futures price from various behaviors of trader. We can classify them into four types of tensor fields under moduli state space model in the next section. The source of SSM in the classification of state in the financial market induces from coupling between these behavior fields of trader in the financial market. We explain in detail with the definition of action of these fields to space of financial market as market cycle and cocycle quantity. These quantities are equivalent to the genus of our new model of Teichmüller space in time series data in the previous subsection.

\subsection{Financial Market Cocycle}

The element $g\in G$ induces a Lie algebras \cite{lie_algebra} $g_{ij}$ in $T_{x_{t}}G$ over the tangent of spin manifold of time series data with jacobian flow $\frac{dg_{ij}}{dt}$, where $t\in [0,1]$ of phase space of transition over homotopy class \cite{massey} of space of time series data. We define so called cocycle property of orbifold in time series data with its dual $g_{k}^{l}=g_{ki}g_{im}g_{m}^{l}$. We call $g_{ij}$ a covariance tensor field in financial time series data, $g^{ij}$ is a contravariance tensor field in financial data and $g_{k}^{l}$ a natural map between covariance and contravariance functor in time series data. We denote the market state in the physiology layer of time series data $x_{t}$ as $[s_{i}]$ and in the main layer of time series data as hidden market state $[A_{i}]$. The expectation states of the prediction we denote as $[s_{i}^{\ast}]$ and $[A_{i}^{\ast}]$. In this work we use a symbol of a homotopy path $[\beta_{t+1}]$ as a equivalent class of path over behavior of trader $[A_{i}^{\ast}]:=[\beta_{t+1}]$,
and a symbol $[\theta_{t+1}]$ as a equivalent class of path over behavior of trader $[s_{i}^{\ast}]:=[\theta_{t+1}]$.
The jacobian is defined as the market cocycle between the hidden coordinate transformation of behavior of trader to the coordinate of the physiology of time series data in outer layer of observation state space $Y_{t}$
\begin{equation}
g_{ij}=\Bigg(\frac{\partial [A_{j}]}{\partial [s_{j}]}\Bigg)_{ij}.
\end{equation}
The scalar product over the tangent of manifold of space of time series data is define over principle bundle $\mathbb{C}P^{1}$
\begin{equation}
<\beta_{t+1},\theta_{t+1}>\simeq \Big[\frac{\beta_{t+1}}{\theta_{t+1}},1\Big]\in \mathbb{C}P^{1}.
\end{equation}

\begin{Definition}
Let define four types of tensor fields in time series data with extradimensions in fibre space with coupling between vector and covector field in moduli state space in time series data $x_{t}\in X_{t}$, $y_{t}\in Y_{t}$, $[\theta_{t+1}]\in [X_{t+1},S^{1}]$, $[\beta_{t+1}]\in [Y_{t+1},S^{1}]$. It is an expectation field of behavior of trader with four component fields
\begin{gather}
\partial_{j}:=\frac{\partial}{\partial [\theta_{t+1}]},\quad \partial_{k}:=\frac{\partial}{\partial x_{t+1}}, \nonumber\\ \partial_{l}:=\frac{\partial}{\partial y_{t+1}},\quad \partial_{m}:=\frac{\partial}{\partial [\beta_{t+1}]}
\end{gather}
\end{Definition}

\begin{Definition}
Let $g_{ij}$ be a market cycle and $g^{ij}$ be a market cocycle over the moduli state space of Kolmogorov space $X_{t}$, $Y_{t}$. The cocycle is an expected futures cycle. Such model is a market communication model between supply and demand side of the physiology layer of time series data and the behavior of trader in hidden layer of time series data.

\begin{figure}[!t]
\centering
\includegraphics[width=.8\linewidth]{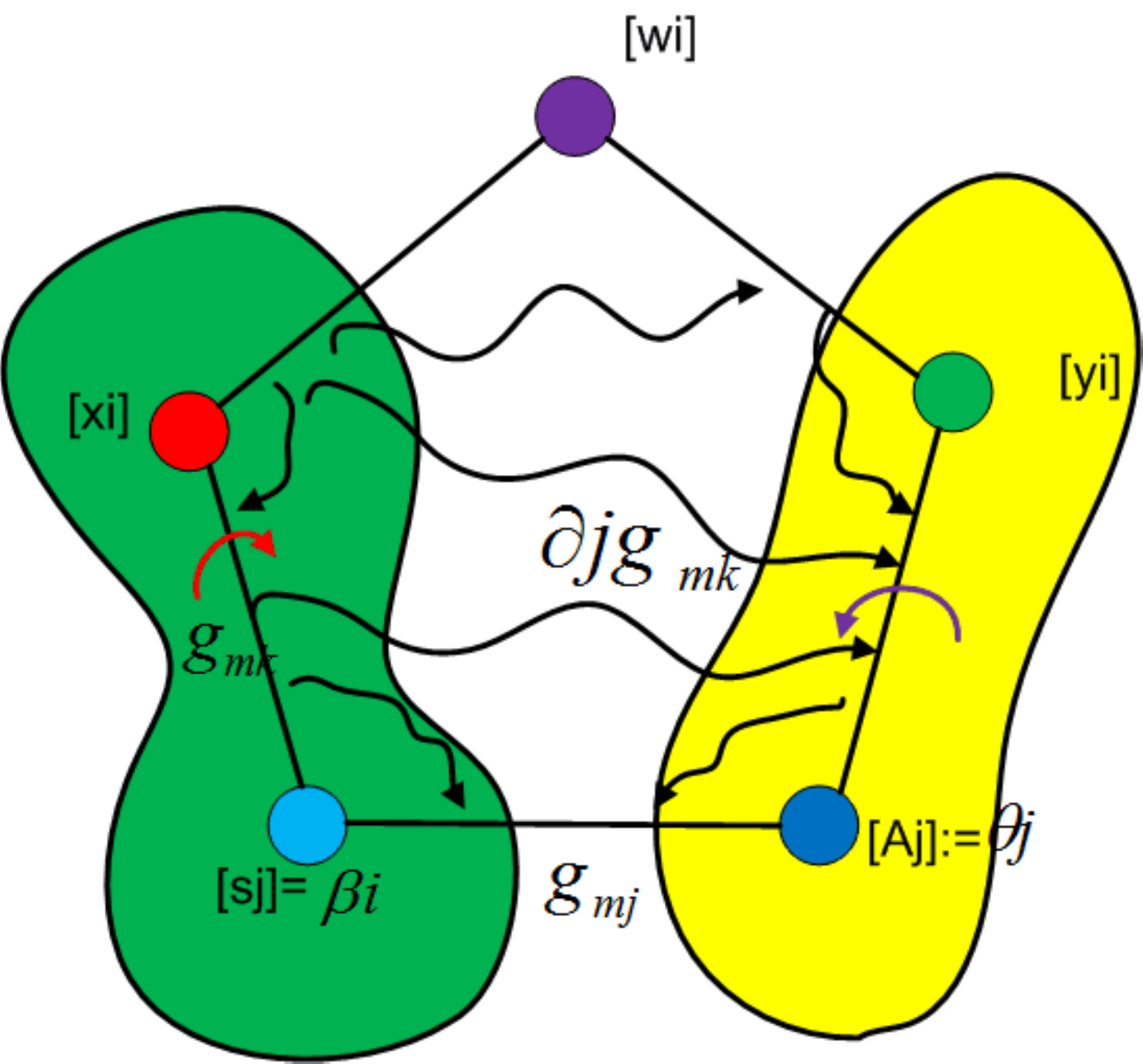}
\caption{The picture shows the example of the definition of four types of tensor field between the physiology layer of time series data $Y_{t}=Y_{t}([s_{1}], [s_{2}], [s_{3}], [s_{4}])$ and main layer in time series data so called hidden behavior of trader $X_{t}([A_{1}, A_{2}, A_{3}])$. The connection is the strength of a tensor field, i.~e., the map between sides in the diagram. \label{tensor}}
\end{figure}
  
We define each component of tensor field in time series data by jacobian of coordinate transformation $g_{ij}$ with explicitly four components of jacobian flow associated with four types of market cycles as the followings (see Fig.~\ref{tensor} for details of the definition)
\begin{itemize}
\item \emph{Type I.} Let $g_{kl} $ is a cycle associate with scalar product of $<x_{t+1},y_{t+1}>=\sum_{kl}g_{kl}x_{k}y_{l}:=\sum_{kl}g_{kl}x_{t+1,k}y_{t+1,l}$, $g^{kl} $ is a cocycle associate with scalar product of $<x_{t+1}^{\ast},y_{t+1}^{\ast}>=\sum_{kl}g^{kl}x_{k}^{\ast}y_{l}^{\ast}:=\sum_{kl}g^{kl}x_{t+1,k}^{\ast}y_{t+1,l}^{\ast}$.
\item \emph{Type II.} Let $g_{ml} $ is a cycle associate with scalar product of $<[\beta]_{t+1},y_{t+1}>=\sum_{ml}g_{ml}[\beta]_{m}y_{l}:=\sum_{ml}g_{ml}[\beta]_{t+1,m}y_{t+1,l}$, $g^{ml} $ is a cocycle associate with scalar product of $<[\beta]_{t+1}^{\ast},y_{t+1}^{\ast}>=\sum_{ml}g^{ml}[\beta]_{m}^{\ast}y_{l}^{\ast}:=\sum_{ml}g^{ml}[\beta]_{t+1,m}^{\ast}y_{t+1,l}^{\ast}$.
\item \emph{Type III.} Let $g_{km} $ is a cycle associate with scalar product of $<x_{t+1},[\beta]_{t+1}>=\sum_{km}g_{km}x_{k}[\beta]_{m}:=\sum_{km}g_{km}x_{t+1,k}[\beta]_{t+1,l}$, $g^{kl} $ is a cocycle associate with scalar product of their dual basis.
\item \emph{Type IV.} Let $g_{jm} $ is a cycle associate with scalar product of $<[\theta]_{t+1},[\beta]_{t+1}>=\sum_{jm}g_{jm}[\theta]_{j}[\beta]_{m}:=\sum_{jm}g_{jm}[\theta]_{t+1,j}[\beta]_{t+1,m}$, $g^{jm} $ is a cocycle associate with scalar product of their dual basis.
\end{itemize}
\end{Definition}

It is known from the differential geometry that a connection over tensor field can be written as
\begin{gather}
\bigtriangledown_{j}g_{kl}=\partial_{j}g_{kl}-\Gamma_{jk}^{m}g_{ml}-\Gamma_{jl}^{m}g_{km},
\\
\Gamma_{ij}^{m}=\frac{1}{2}g^{ml}(\partial_{j}g_{il} +\partial_{i}g_{lj}-\partial_{l}g_{ji}).
\end{gather}
The above equation allow us to work over the connection of fibre space \cite{fibre} as an extradimension in moduli stat space of Killing vector field approach. We just use the tool for the computation of expected state of our new definition of spinor field in time series data.
\begin{Definition}
Let support spinor be an arbitrage opportunity $\Gamma_{ij}^{m}$ in financial market over the moduli state space model in time series data as an Riemannian connection preserves scalar product over the parallel translation along fibre space of the physiology layer in time series data.
\end{Definition}
\begin{Definition}
The Ricci tensor in time series data is a contraction of a curvature tensor defined by $R_{ik}=R_{ikl}^{j}$ with respect to natural frame of connection
\begin{equation}
R_{ik}=\partial_{k}\Gamma_{ji}^{j}- \partial_{j}\Gamma_{ki}^{j}+\Gamma_{km}^{j}\Gamma_{ji}^{m}-
\Gamma_{jm}^{j}\Gamma_{ki}^{m}.
\end{equation}
\end{Definition}
Note that we deform space time of Kolmogorov space in time series data by using a curvature tensor and give an existence of price particle with spin and $8$-market states so called arbitron or arbitrage opportunity $R_{ik}$ in financial market. It is known from differential geometry that
\begin{equation}
\Gamma_{jk}^{j}=\frac{1}{2}g^{jm}(\partial_{k}g_{mj})=\frac{1}{2g}\frac{\partial g}{\partial x^{k}}
=\frac{\partial }{\partial x^{k}}\ln|g|^{\frac{1}{2}}
\end{equation}
since
\begin{equation}
g^{ij}=\frac{1}{g}\frac{\partial g}{\partial g_{ij}}.
\end{equation}
For the case of the market equilibrium Ricci curvature is zero $R_{ik}=0$, i.~e., the arbitrage opportunity disappears and the physiology of time series data contains no curvature. It is a Peterson-Codazzi equation in the differential form \cite{patterson,patterson2}. The solution of the equation is a curvature of SSM for classification Riemann surface of time series data.

\begin{figure}[!t]
\centering
\includegraphics[width=\linewidth]{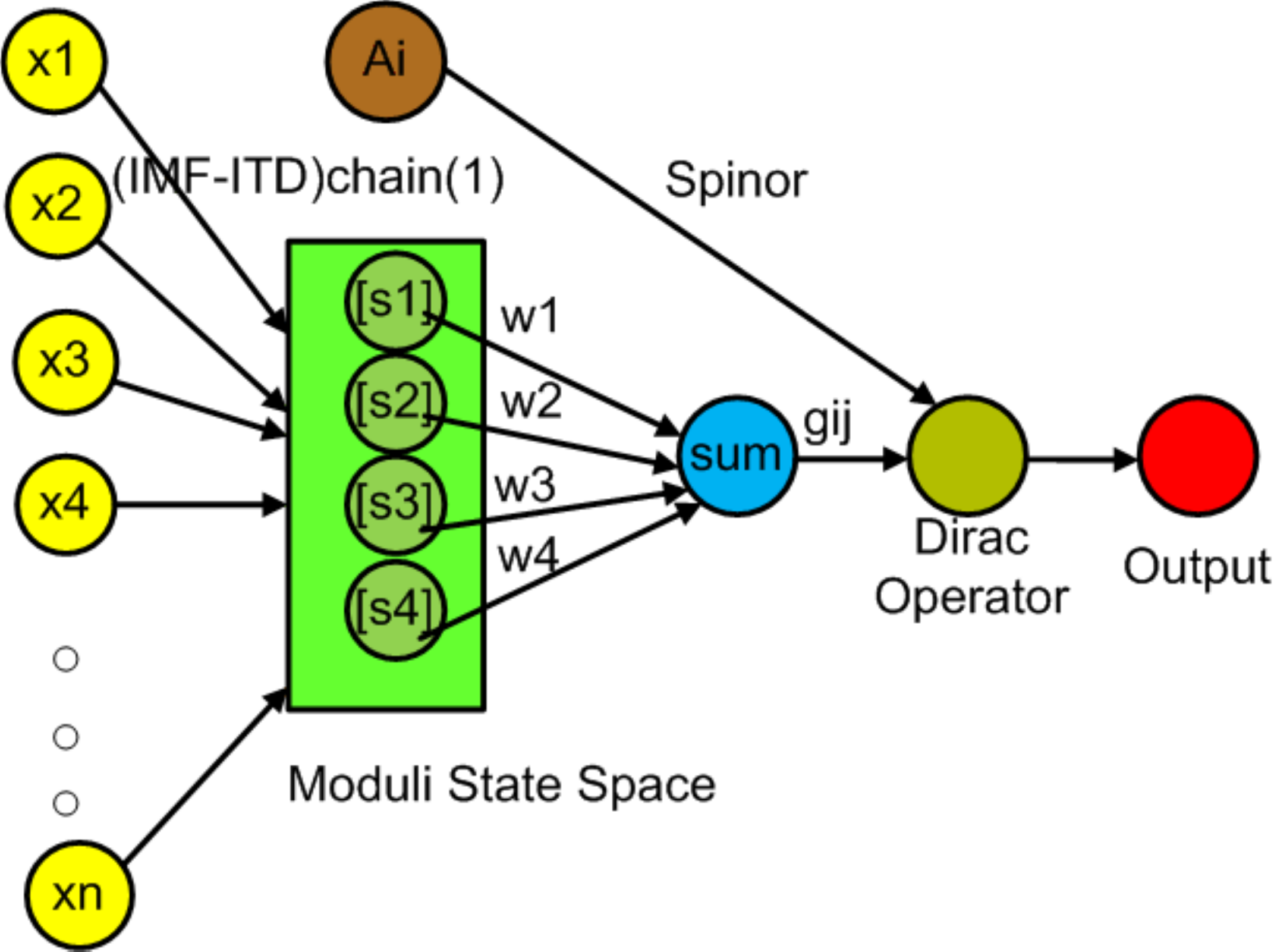}
\caption{The diagram of SSM with support Dirac machine. In comparison with the artificial neuron network we call this network the artificial support spinor network. The network is associated with Dirac equation in finances with solution with spin half particle of behavior of trader $A_{i}$ in market four states. We call the solution of a lagrangian of SSM an arbiron or arbitrage opportunity state.\label{ssm_algorithm}}
\end{figure}

We have a support Dirac machine (SDM) for financial time series data (see Fig.~\ref{ssm_algorithm} for new type of artificial neuron network with connection) by using a connection and market cocycle modulo the state of behavior of trader as bias module in neuron network,
\begin{multline}
\sum_{i} [s_{i}] \bigtriangledown_{[s_{i}]}g_{kl}<x_{t},y_{t}>=\\\gamma([A_{i}])<x_{t},y_{t}>\,\, mod\,\, [A_{i}]
\end{multline}
where $\gamma([A_{i}])$ is Dirac matrix over Pauli matrix of equivalent class of behavior of trader $[A_{i}]$, $i=1,2,3$.

The modulation of $[A_{i}]$ is equivalent to the amplitude modulation in Holo-Hilbert algorithm \cite{holo} in our empirical part of the paper (see Section~\ref{sec:results}).

\begin{Definition}\label{def}
A support spinor is a coupling state between a market four states in financial market induced from trading behavior with spin half as an optimal solution of following equation
\begin{multline}
L([s_{i}],[A_{i}],w_{i})=\sum_{i=1}^{4}[s_{i}]\bigtriangledown_{[s_{i}]}<w,[s_{i}(x_{t})]>\\-\gamma(A_{i})<w,[s_{i}(x_{t})]> \quad mod \quad [A_{i}].\label{dirac}
\end{multline}
\end{Definition}

\subsection{Proof of the existence of SSM}

\begin{Theorem}
The weight of support spinor machine is an equivalent class $[w]$ as a solution of Eq.~(\ref{dirac}), for financial time series data it can be written as
\begin{equation}
[w]= \ii\oint\limits_{H^{2}(X_{t}/Y_{t})} (\Gamma_{jk}^{m}g_{ml}+\Gamma_{jl}^{m}g_{km})d[s_{i}]\wedge d[s_{j}^{\ast}] \,\, mod \,\, [A_{i}].
\end{equation}
\end{Theorem}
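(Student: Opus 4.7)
The plan is to extract the optimal $w$ from the Lagrangian of Definition~\ref{def} by a variational principle, and then recast the resulting equation as a cohomology-valued contour integral using the tensor-field data introduced earlier.

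First, I would impose the extremum condition $\delta L/\delta w \equiv 0 \pmod{[A_i]}$ on the Dirac-type functional. This produces a balance between the connection term $\sum_{i}[s_{i}]\bigtriangledown_{[s_{i}]}<w,[s_{i}(x_{t})]>$ and the Dirac term $\gamma(A_{i})<w,[s_{i}(x_{t})]>$. Substituting the Riemannian connection identity $\bigtriangledown_{j}g_{kl}=\partial_{j}g_{kl}-\Gamma_{jk}^{m}g_{ml}-\Gamma_{jl}^{m}g_{km}$ already stated in the market cocycle subsection, the partial-derivative piece $\partial_{j}g_{kl}$ is exact with respect to $d[s_{i}]$ and therefore drops out after integration over any closed cycle, leaving the Christoffel combination $(\Gamma_{jk}^{m}g_{ml}+\Gamma_{jl}^{m}g_{km})$ as the only surviving integrand.

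Second, I would promote this local equality to a global cohomological statement. By the duality chain $[w]:\pi_{1}(\mathbb{R}^{n+1}-X_{t})\to H_{1}(X_{t})\to H_{2}(X_{t})\to H^{2}(X_{t})$ from the spinor-field definition, $[w]$ naturally represents a class in the relative group $H^{2}(X_{t}/Y_{t})$, which in turn is represented by a closed 2-form. The wedge $d[s_{i}]\wedge d[s_{j}^{\ast}]$ is precisely the antisymmetric pairing between a physiology coordinate $[s_{i}]$ and its expectation-dual $[s_{j}^{\ast}]$, reflecting the Type~I market cycle/cocycle pairing between $g_{kl}$ and $g^{kl}$. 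Integrating the extremum condition against this pairing gives the right-hand side as the claimed 2-cohomology integral, and the congruence $\mathrm{mod}\,[A_{i}]$ is inherited directly from the Lagrangian.

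Third, I would interpret the contour $\oint$ and the prefactor $\ii$. Because the underlying principal bundle is the complex projective line $\mathbb{C}P^{1}\sim S^{2}$, the integration of the representative 2-form collapses via Stokes' theorem to a closed contour around the support of the arbitrage opportunity, i.e.\ the locus where the Ricci tensor $R_{ik}$ is nonzero; the factor $\ii$ appears through the residue normalization on $\mathbb{C}P^{1}$, equivalently through the holomorphic parameterization by the Teichm\"uller moduli $\mathcal{M}_{g}$ built in the moduli-state subsection. The main obstacle will be step two: rigorously justifying that $(\Gamma_{jk}^{m}g_{ml}+\Gamma_{jl}^{m}g_{km})\,d[s_{i}]\wedge d[s_{j}^{\ast}]$ is $d$-closed and therefore genuinely defines a class in $H^{2}(X_{t}/Y_{t})$. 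This rests on the Peterson-Codazzi condition $R_{ik}=0$ quoted above (which supplies the second-Bianchi-type identity for the Christoffel symbols at market equilibrium) together with a compatibility check between the wedge factors and the cell structure of the relative complex $C_{2}(Y_{t}/X_{t})$; without this closedness, the integrand would depend on the choice of representative and the formula for $[w]$ would not be a well-defined cohomology class.
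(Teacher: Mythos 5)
There is a genuine gap, and it sits exactly where your argument pivots. You discard the $\partial_{j}g_{kl}$ term as ``exact with respect to $d[s_{i}]$'' so that it ``drops out after integration over any closed cycle,'' leaving only the Christoffel combination. In the paper's proof that term is the opposite of disposable: it is the carrier of $[w]$. The paper first fixes a concrete representative $[w]=[e^{\ii x_{t}y_{t}}]\in[X_{t},S^{1}]$, works with the \emph{Type II} pairing $g_{kl}=<[\beta]_{t+1},x_{t+1}>\simeq e^{\ii x_{t+1}y_{t+1}}x_{t+1}=-\ii\,\partial_{y_{t+1}}e^{\ii x_{t+1}y_{t+1}}$ (not the Type I pairing you invoke), and hence obtains
\begin{equation*}
\partial_{j}g_{kl}=-\ii\,\frac{\partial^{2}}{\partial^{2}y_{t+1}}[w].
\end{equation*}
The Killing condition $\bigtriangledown_{j}g_{kl}=0$ then turns the connection identity into the second-order equation $-\ii\,\partial^{2}_{y_{t+1}}[w]=\Gamma_{jk}^{m}g_{ml}+\Gamma_{jl}^{m}g_{km}$, and the theorem's formula is produced by integrating both sides \emph{twice} over $d[s_{i}]\wedge d[s_{j}^{\ast}]$ (classifying $y_{t+1}$ into the physiology classes) and reducing mod $[A_{i}]$; the prefactor $\ii$ is simply $(-\ii)^{-1}$ from this double integration, not a residue normalization on $\mathbb{C}P^{1}$. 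If the derivative term really vanished as you claim, the surviving identity would be $0=\oint(\Gamma_{jk}^{m}g_{ml}+\Gamma_{jl}^{m}g_{km})\,d[s_{i}]\wedge d[s_{j}^{\ast}]$, which says nothing about $[w]$: your proposal never supplies a mechanism by which $[w]$ appears on the left-hand side of the final equation, since citing the duality chain into $H^{2}(X_{t}/Y_{t})$ only locates $[w]$ in a group, it does not equate it to the integral.

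Two smaller remarks. Your starting point, the variational condition $\delta L/\delta w\equiv 0 \bmod [A_{i}]$ on the Lagrangian of Definition~\ref{def}, is in the same spirit as the paper's imposition of the Killing equation together with $\sum_{i}[s_{i}]\cdot\bigtriangledown_{j}g_{kl}<x_{t},y_{t}>=0$, so that part is a reasonable reformulation. Your closing worry about closedness of the integrand and its relation to the Peterson--Codazzi condition $R_{ik}=0$ is a legitimate concern that the paper does not address either; but it is a refinement of a derivation you have not yet reached, because without the explicit representative $[w]=[e^{\ii x_{t}y_{t}}]$ and the identification $\partial_{j}g_{kl}=-\ii\,\partial^{2}_{y_{t+1}}[w]$ the double integration that actually yields the stated formula cannot be performed.
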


\begin{proof}
Let $[w]\in [X_{t},S^{1}]$ is defined by $[w]= [\beta_{t}:\sum \lambda_{i} \frac{x_{t}}{y_{t}}   \mapsto e^{i<x_{t},y_{t}>}]= [e^{ix_{t}y_{t}}]$.
Consider type II of SSM with $g_{kl}$ defined by
\begin{multline}
g_{kl}=<w,x>=:<[\beta]_{t+1},x_{t+1}>=\\<[e^{ix_{t+1} y_{t+1}}],x_{t+1}>\simeq e^{ix_{t+1} y_{t+1}}x_{t+1}
\end{multline}
where
\begin{equation}
-i\frac{\partial}{\partial y_{t+1}}  e^{ix_{t+1} y_{t+1}}=e^{ix_{t+1} y_{t+1}}x_{t+1}:=g_{kl}
\end{equation}
We define a covector field of $\partial_{j}g_{kl}$ by a vector field
\begin{equation}
\partial_{j}g_{kl}:=\frac{\partial}{\partial y_{t+1}} g_{kl}= -i \frac{\partial^{2}}{\partial^{2} y_{t+1}}[w] \label{form}
\end{equation}
Consider Killing equation over Killing support vector field
\begin{equation}
\bigtriangledown_{j}g_{kl}=0
\end{equation}
with
\begin{equation}
\sum_{i=1}^{4}[s_{i}]\cdot\bigtriangledown_{j}g_{kl}<x_{t},y_{t}>=0.
\end{equation}
We have an equation
\begin{equation}
\bigtriangledown_{j}g_{kl}=\partial_{j}g_{kl}-\Gamma_{jk}^{m}g_{ml}-\Gamma_{jl}^{m}g_{km}=0,
\end{equation}
with
\begin{equation}
\partial_{j}g_{kl}:=-i \frac{\partial^{2}}{\partial^{2} y_{t+1}}[w]= \Gamma_{jk}^{m}g_{ml}+\Gamma_{jl}^{m}g_{km},
\end{equation}
where
\begin{equation}
\Gamma_{ij}^{m}=\frac{1}{2}g^{ml}(\partial_{j}g_{il} +\partial_{i}g_{lj}-\partial_{l}g_{ji}).
\end{equation}
If we classify the observation space $y_{t+1}\in [s_{i}]$, where $[s_{i}]$ is an equivalent class of the physiology of time series data, we will receive an equation for SSM $[w]$ by using De-Rahm cohomology in financial time series. We induce a closed surface integral over second differential form of Riemann sphere over the physiology of time series data of present state $[s_{i}]$ and expectation state $[s^{\ast}_{j}]$ (from Eq.~ (\ref{form}), integrate both sides two times and then take modulo of the state of behavior of trader)
\begin{multline}
[w]_{n=0,1,2\dots}=\\ \ii\oint\limits_{H^{2}(X_{t}/Y_{t})} (\Gamma_{jk}^{m}g_{ml}+\Gamma_{jl}^{m}g_{km})d[s_{i}]\wedge d[s_{j}^{\ast}] \quad mod\quad [A_{i}]. 
\end{multline}
Therefore we get a diophantine equation for financial time series data as a equation for SSM (we call it SSM-equation) with the quantization state of the weight $[w]_{n=0,1,2\dots}$
\begin{multline}
[w]_{n=0,1,2\dots}-n[A_{i}]=\\ \ii\oint\limits_{H^{2}(X_{t}/Y_{t})} (\Gamma_{jk}^{m}g_{ml}+\Gamma_{jl}^{m}g_{km})d[s_{i}]\wedge d[s_{j}^{\ast}] . 
\end{multline}
in moduli state space of behavior of equivalent class of a behavior of trader $[A_{i}]$. This equivalent class satisfies a differential 3-form of behavior of trader as market potential field $\mathcal{A}$
\begin{equation}
d\mathcal{A}=\sum_{ijk=1,2,3} F_{ijk}^{\bigtriangledown_{[s_{i}]}} dA_{i}\wedge A_{j}\wedge A_{k}
\end{equation}
and it has a deep relationship with supply and demand potentials in cohomology theory of financial market \cite{cohomo}.
\end{proof}

\begin{figure}[!t]
\centering
\includegraphics[width=\linewidth]{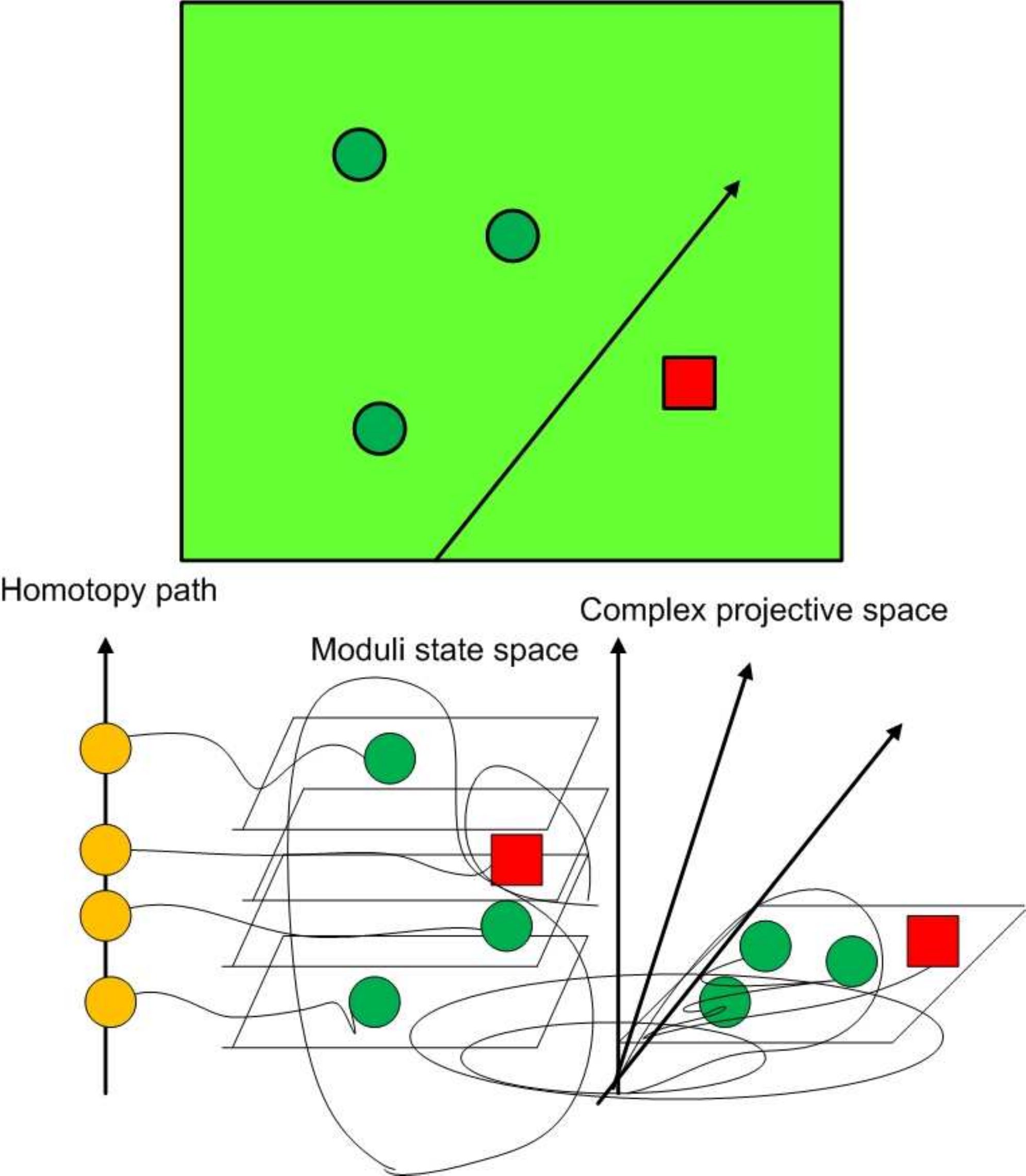}
\caption{Picture shows the extradimension in spinor field of plane separation in classical support vector machine.\label{separate}}
\end{figure}

\subsection{Optimal hyperplane for SSM}

In SSM we work on non-Euclidean space as the plane of classification in which we can use homotopy to deform classical Euclidean space of classification plane in SVM into moduli space of time series. The space of time series has intrinsic cocycle as jacobian flow in another hidden space and its dual space that embed in between plane of classification in D-brane and anti-Dbrane model for time series data.

Let us consider Hermitian product over section $\mathbb{C}P^{1}$ of hyper-K\"{a}her manifold. Let $w\in A$ be support spinor $w$ associated with a hyper-spinor plane $H_{w}=\{w\in A |w^{\ast}=w\}$ as a source of existence of weight in optimal SSM,
\begin{equation}
<w,w>=\sum_{j}<w,e_{j}^{\ast}><e_{j},w>.
\end{equation}
We work on hidden space of time series data with induced dual Lagragian $L^{\ast}$ with free product over von Neumann algebras,
\begin{equation}
L^{\ast}(w^{\ast},b^{\ast},\alpha^{\ast})\otimes_{A} L(w,b,\alpha).
\end{equation}
We set extra parameter called a Lagrange multiplier $\alpha_{i}=\frac{dg_{ij}}{dt}$ as a Jacobian flow of a extradimension of hidden space of time series $X_{t}$ and we introduce $\alpha_{i}^{\ast}=\frac{dg^{ij}}{dt}$as a Jacobian flow of a extradimension of hidden space of time series $X_{t}^{\ast}.$ The space of input data $x_{t}\in X_{t}$ is endown with $T_{0}-$separation over quotient topology of moduli space of induce a Teichmüller space in time series data with genus $g=g_{ij}(x_{t})$. We use D-brane theory of double D-brane.One side of brane is a modelling time series data as classifying space with separation with non-Eculidean plane with negative curvature as geometrical meaning of SSM. If input data of feature space $(x_{i},y_{i}=1)$, we associate with what spinor machine do by using homotopy to deform underlying topological space (Riemann surface of time series data) to target Riemann sphere with positive curvature. We deform by Jacobain flow or derivative of Riemann metric tensor $g_{ij}$ along parallel transport of Killing vector field.

If a curvature of the plane of classification of SSM is not zero, we deform moduli space to hyperbolic space  
with induced classification function for SSM classify with negative curvature $g_{ij}=-1$
\begin{equation}
g_{ij}<w,x>=-<w,x>=||w||.||x||\cosh \dd(w,x)
\end{equation}

We normalized support spinor and input state vector to unit length in $S^{1}$ according to our definition above, so we have a distance from complex plane, $[w]\in H_{0}(X_{t}/Y_{t})$ project to projective line by $<w,x>\in \mathbb{R}$ with $<w,x>:=[\tan\theta_{t}] =[\frac{w}{x}]\simeq 
\frac{[\m{Im}(y_{t})]}{[\m{Re}(y_{t})]}\in\mathbb{C}P^{1}\simeq S^{2}$. Where $[\m{Im}(y_{t})]$, $[\m{Re}(y_{t})]$ are equivalent class of real as market cocycle and imaginary part as market boundary  in cohomology theory in financial market. This scalar product has a Hermitian structure of complex projective space.  We define a market cocycle over Hermitian product of section of Kähler manifold of financial market by
\begin{equation}
g_{ij}':=d(w,x)=\arccosh(-<w,x>).
\end{equation}
 
We consider $\partial_{k}g_{ij}=0$ as a simple solution of an example of how to find a separated hyperplane of SSM over complex projective plane (see Fig.~\ref{separate} for the projective hyperplane) with previous definition of $\partial_{k}:=\frac{\partial}{\partial\theta_{t+1}}$. We will not compute all components of Riemannian curvature here, because the equation will be very cumbersome. We just want to shown the simple solution without the diffinition of Riemannian curvature of parallel translation of support vector along curves on a Kähler manifold.

\begin{figure}[!t]
	\centering
	\includegraphics[width=\linewidth]{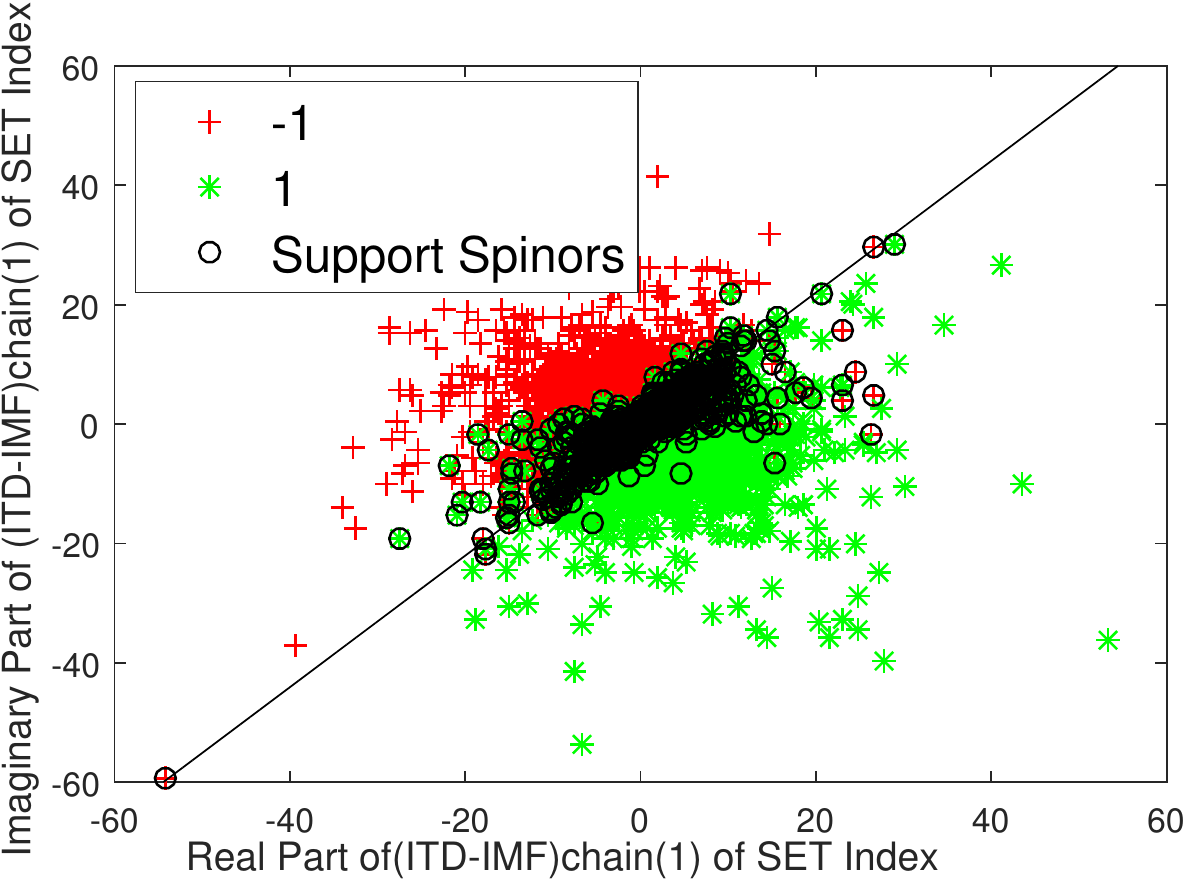}
	\caption{The empirical analysis of SSM over complex plane $\ICHAIN$ of closed price for financial time series data of SET index in 9528 days from 02/05/1975 to 24/06/2014. The separated hyperplane between up and down state of $y_{t}=\{-1,1\}$ with linear kernel of HT-SSM is approximately a straight line cut unit cycle $S^{1}$ at $\frac{\pi}{4}$ and $\frac{3\pi}{4}$. This line separate the up state as the equivalent class of $[1]:=[2\pi]\in Y_{1}:=\pi_{1}(\mathbb{R}^{2}-X_{t})$ and the down state defined by the equivalent class of $[-1]:=[\pi]$.
		\label{hyperplane}}
\end{figure}
 
Consider
\begin{multline}
\partial_{k}g_{ij}':=\frac{\partial}{\partial\theta_{t+1}}\arccosh(-<w,x>) =\\ \frac{1}{\sqrt{<w,x>^{2}-1}}\frac{\partial}{\partial\theta_{t+1}}(-<w,x>)=0.
\end{multline}
We have $<y_{t},x_{t}>\in [\frac{y_{t}}{x_{t}},1]\simeq <w,x>\in \mathbb{C}P^{1}\simeq S^{2}$ so we approximate the scalar curvature of weight $<w,x>$ approximated by  $[<y,x>]\in \mathbb{CP}^{1}$ since weight of SSM use for classifying $y_{t}\in Y_{t}$. We assume that the solution is perfect approximation. In the above scalar product of weight of SVM with parallel translation along curve in $S^{2}$, we induce a Riemannian curvature in SSM over principle bundle $\mathbb{C}P^{1}$ of Kolmogorov space in time series data by
\begin{multline}
\frac{1}{\sqrt{<w,x>^{2}-1}}\frac{\partial}{\partial\theta_{t+1}}(-<w,x>)=\\\frac{1}{\sqrt{<w,x>^{2}-1}}\frac{\partial}{\partial\theta_{t+1}}(-\tan\theta)=0.
\end{multline}
 We get
\begin{equation}
(<w,x>-1)(<w,x>+1)=0
\end{equation}
and $\tan\theta=0$.
We consider
\begin{equation}
<w,x>\simeq  \frac{w}{x}\simeq \tan\theta \simeq \frac{[\m{Im}(y_{t})]}{[\m{Re}(y_{t})]}\in \mathbb{CP}^{1}.
\end{equation}
Since $(<w,x>-1)(<w,x>+1)=0$ we get an equivalent class of classification over degree of map in complex plane by given a simple example of solution a hyperplane with equivalent class of weight in spinor of SSM with solution in principle angle,
\begin{equation}
[\theta^{\ast}]=\arctan\frac{[\m{Im}(y_{t})]}{[\m{Re}(y_{t})]}=0, \frac{\pi}{4}, \frac{3\pi}{4},2\pi. 
\end{equation}
We choose an optimal separate hyperplane for SSM as a support spinor with degree $[\theta^{\ast}]=[\frac{\pi}{4}]$. The of result of empirical analysis for SSM is shown in Fig.~\ref{hyperplane}.

\begin{figure}[!t]
	\centering
	\includegraphics[width=.9\linewidth]{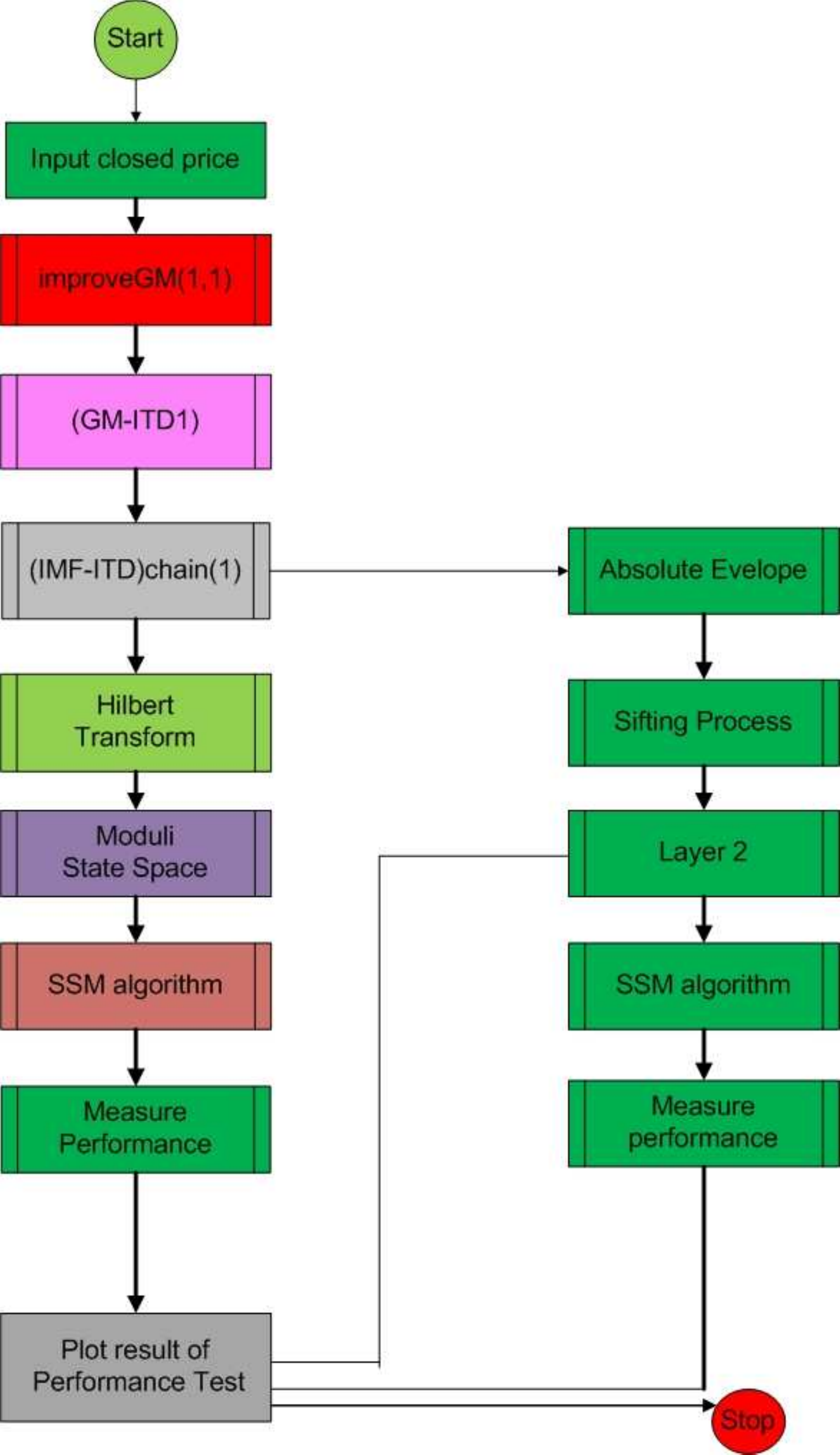}
	\caption{The flowchart of all involved modules for the performance measuring between SSM and SVM in this work \label{flow}.}
\end{figure}

\section{Result of Empirical Analysis}\label{sec:results}

In this section we implement the SSM with the adaptive data analysis algorithm. The end points of time series data are classified after empirical mode decomposition into four states in observation state space model $Y_{t}=Y_{t}([s_{1}],[s_{2}],[s_{3}],[s_{4}])$. The hidden state space of behavior of trader $X_{t}$ is in the complex projective space over second layer of the Holo-Hilbert spectral analysis with the amplitude modulation mode. We test the performance of the SSM to classify the state in financial time series data only for the up and down states with 6 different types of financial time series data (for details about the empirical test data see Table~\ref{input3}). The performance of directional prediction for various kernels is compared with the classical SVM methodology.

The sifting process of the empirical mode decomposition is one of an example of moduli state space model since mean of sifting process can be approximate by identity of an equivalent class over physiology of time series data $[s_{2}]+[s_{4}]=[1]\simeq [s_{3}]+[s_{1}]$. It is a moduli of mean in sifting process of intrinsic mode functions (IMF). The Holo-Hilbert transform of the result of $\ICHAIN$ (defined in \cite{cw}) to a complex plane is equivalent to the equivalent class of cohomology group in time series data in this work. The Holo-Hilbert algorithm renders a spinor field in higher extradimension. It is assumes that the result of 2nd-layer is an angle of support spinor field as a solution of the Eq. (\ref{dirac}) in Definition \ref{def}.
						 
\begin{table*}[thbp]
\renewcommand{\arraystretch}{1.3}
\centering
\begin{tabular}{llclrcr}
\toprule
\textbf{Stock index} & \textbf{Country} &\textbf{Data Set}& \textbf{Sample type} & \textbf{Size} & \textbf{Date} & \textbf{Date number}\\
\colrule
SET&Thailand &(1)& Out of sample test &651& 26/06/2014 --- 14/10/2016 &9530--10181\\
&&(2)&Training set&9529& 02/05/1975 --- 24/06/2014 &1--9528 \\ 
&&(3)& Out of sample& 31 & 26/06/2014 --- 13/08/2014 &9530--9560\\
RTS& Russia &(1)& Out of sample test &31& 03/12/2015 --- 20/01/2016 &5019--5049 \\
&&(2)&Training set&5018& 01/09/1995 --- 02/12/2015 &1--5018   \\ 
&&(3)& Out of sample&142& 03/12/2015 --- 01/07/2016  &     5019--5160  \\
FBM KLCI& Malaysia&(1)& Out of sample test &99& 03/03/2016 --- 25/07/2016 &8416--8514\\
&&(2)& Out of sample test &31& 03/03/2016 --- 14/04/2016 &8416--8446\\
&&(3)&Training set&8415& 04/03/1982 --- 02/03/2016  &1--8445 \\ 
PSEi& Philipines&(1)& Out of sample test &31& 03/03/2016 --- 18/04/2016 &7192--7222\\
&&(2)&Training set&7191& 02/03/1987 --- 02/03/2016 &1--7191 \\ 
&&(3)& Out of sample&99& 03/03/2016 --- 25/07/2016    &7192--7290\\
JKSE& Indonesia&(1)& Out of sample test &31&  25/02/2016 --- 12/04/2016 & 6327--6358\\
&&(2)&Training set&6356& 06/04/1990 --- 24/02/2016 & 1--6356 \\ 
&&(3)& Out of sample&96&  25/02/2016 --- 20/07/2016 & 6327--6422\\
STI& Singapore&(1)& Out of sample test &31& 04/03/2016 --- 20/04/2016 & 4145--4176\\
&&(2)&Training set&4144 & 31/08/1999 --- 03/03/2016  &1--4144 \\ 
&&(3)& Out of sample&85 & 04/03/2016 --- 04/07/2016  &4145--4229\\
\botrule
\end{tabular}
\caption{Table shows six different types of testing time series data for stock indexes of closed prices.\label{input3}}
\end{table*}

\begin{table*}[!t]
\renewcommand{\arraystretch}{1.3}
\centering
\begin{tabular}{llcccccrrc}
\toprule
\textbf{Stock} & \textbf{Country} & \textbf{Days} & \textbf{Date} &\textbf{Kernel}&  \multicolumn{2}{c}{\textbf{Performance}}  & \multicolumn{2}{c}{\textbf{Profit}} & \textbf{Improved}\\ 
\textbf{index} & &  &  &\textbf{type}& \textbf{SVM} & \textbf{SSM} & \textbf{SVM} & \textbf{SSM} & \textbf{performance}\\ 
\colrule
SET&Thailand&31& 9530-9560& rbs(0.05)& 43.33\%&67.74\%& -1.73 & 116.48&  56.34\%\\
&& 31&9530-9560& linear& 33.33\%& 58.06\%&-90.25& 50.82& 74.20\%
\\
RTS&Russia& 31&5019-5049  &rbs(0.05)& 58.06\%&67.74\%&-27.65 &113.55&16.67\%\\
&& 31&5019-5049&linear&41.94\%&61.29\%&-62.41&64.71&   46.13\%\\
FBM KLCI&Malaysia&31 &8416-8446&rbs(0.05)& 56.25\%&58.06\%& 19.80&22.25&3.22\%\\
&& 31&8416-8446&linear&46.88\%&38.71\%&-36.96&-31.29&-17.49\%\\
 PSEi&Philiphines& 31 &7192-7222&rbs(0.05)&48.39\%&61.29\%& -189.03&376.49& 26.66\%\\
&&31  &7192-7222&linear&41.94\%&61.29\%& -198.97&281.39&46.14\%\\
JKSE&Indonesia& 31 &6327-6358& rbs(0.05)&54.84\%&58.06\%& 100.72&129.25&5.87\%\\
&& 31 &6327-6358&linear&41.94\%&61.30\%& -129.25&187.36&46.19\%\\
STI&Singapore&31&4145-4176&rbs(0.05)& 48.39 \%&48.39\%  &-25.31&62.89&0\%\\ 
&&31&4145-4176&linear  &  45.16\% & 46.67\%&-33.99  &-183.44&0.007\%\\
\colrule
Average&&&&&46.70\%&57.38\%&-56.25&  98.71& 25.33\%\\
\botrule
\end{tabular}
\caption{Table shows six different types of test time series data for stock indexes with SVM forecasting. The table reports the performance of SVM with different kernels in Euclidean plane.
\label{input}}
\end{table*}

\begin{table*}[!t]
	\renewcommand{\arraystretch}{1.3}
	\centering
	\begin{tabular}{llcccccclc}
		\toprule
		\textbf{Stock} & \textbf{Country} & \textbf{Days} & \textbf{Date} &\textbf{Kernel}&  \multicolumn{2}{c}{\textbf{Performance}}  & \multicolumn{2}{c}{\textbf{Profit}} & \textbf{Improved}\\ 
		\textbf{index} & &  &  &\textbf{type}& \textbf{HT-SVM} & \textbf{HT-SSM} & \textbf{HT-SVM} & \textbf{HT-SSM} & \textbf{performance}\\ 
		\colrule
		SET&Thailand&31& 9530-9560& rbs(0.05)&58.06\%&67.74\%& 30.20 &  116.48& 16.67\%\\
		&& 31&9530-9560& linear&51.61\%& 58.06\%&-2.84 &50.82 &12.50\%\\
		RTS&Russia& 31&5019-5049  &rbs(0.05)&61.29\%&67.74\%&55.89 &113.55& 10.52\%\\
		&& 31&5019-5049&linear&51.61\%&61.29\%&46.67&64.71&18.76\%\\
		FBM KLCI&Malaysia&31 &8416-8446&rbs(0.05)&54.84\%&58.06\%& -5.09&22.25&5.87\%\\
		&& 31&8416-8446&linear&41.94\%&38.71\%&-26.41&-31.29& -7.70\%\\
		PSEi&Philiphines& 31 &7192-7222&rbs(0.05)&58.06\%&61.29\%&215.45&376.49&5.58\%\\
		&&31  &7192-7222&linear&45.16\%&61.29\%&-240.53&281.39&35.72\%\\
		JKSE&Indonesia& 31 &6327-6358& rbs(0.05)&70.97\%&58.06\%& 495.90&129.25&-18.19\%\\
		&& 31 &6327-6358&linear&58.06\%&61.30\%& 169.22&187.36&5.58\%\\
		STI&Singapore&31&4145-4176&rbs(0.05)& 45.16 \%&48.39\%  &16.13&62.89& 7.15\%\\ 
		&&31&4145-4176&linear  & 44.44\% & 46.67\%&-60.20  &-183.44&5.02\%\\
		\colrule
		Average&&&&&53.43\%&57.38\%&57.45&98.71&8.12\%\\
		\botrule
	\end{tabular}
	\caption{Table shows six different types of test time series data for stock indexes with one day ahead prediction of HT$-\ICHAIN-$SSM methodology compared with HT$-\ICHAIN-$SVM.  The separation is done over the complex plane by using Hilbert transform. SSM has a better performance in directional one day ahead prediction than SVM.\label{input2}}
\end{table*}

\begin{table*}[!t]
\renewcommand{\arraystretch}{1.3}
\centering
\begin{tabular}{llcccccrrc}
\toprule
\textbf{Stock} & \textbf{Country} & \textbf{Days} & \textbf{Date} &\textbf{Kernel}&  \multicolumn{2}{c}{\textbf{Performance}}  & \multicolumn{2}{c}{\textbf{Profit}} & \textbf{Improved}\\ 
\textbf{index} & &  &  &\textbf{type}& \textbf{HT-SVM} & \textbf{HT-SSM} & \textbf{HT-SVM} & \textbf{HT-SSM} & \textbf{performance}\\ 
\colrule
SET&Thailand&651& 9530-10181& rbs(0.01)&52.15\%&54.62\%&531.34 &584.94&4.74\%\\
&& 651&9530-10181& linear&48.77\%&51.23\%&30.5&298.06&5.04\%\\
RTS&Russia& 142&5019-5160  &rbs(0.05)& 49.30\%&54.93\%&  117.62 & 299.88& 11.42\%\\
&& 142&5019-5160&linear&54.22\%&50.70\%& 432.20& -17.06&-6.49\%\\
FBM KLCI&Malaysia &99&8416-8514&linear&43.43\%&47.47\%& -106.60&13.32& 9.30\%\\
&& 99&8416-8514&rbs(0.01)&41.42\%&  53.54\%& -115.90&100.34&29.26\%\\
PSEi&Philiphines& 99 &7192-7290&rbs(0.05)&53.53\%& 58.59\%&  -17.19& 1342.53&9.45\%\\
&&99  &7192-7290&linear&57.57\%&57.57\%&  1381.27& 1204.59&0\%\\
JKSE&Indonesia& 96 &6327-6422& rbs(0.05)& 59.37\%&62.50\%&  735.46&731.66& 3.13\%\\
&& 96 &6327-6422&linear&51.04\%& 60.42\%& 58.67& 390.05&5.58\%\\
STI&Singapore&85&4145-4229&rbs(0.05)&   52.94\%& 49.41\%  & 304.84&  217.5&  -6.67\%\\ 
&&85&4145-4229&linear  &  44.71\% & 50.59\%& -205.16 &  -82.94& 13.15\%\\
\colrule
Average&&&&&50.70\%&54.29\%& 262.20&423.57&6.49\%\\
\botrule
\end{tabular}
\caption{Table shows six different types of test time series data for stock indexes with one day ahead prediction of HT$-\ICHAIN-$SSM methodology compared with HT$-\ICHAIN-$SVM. The separation is done over the complex plane by using Hilbert transform. Again SSM has better performance in one day ahead directional prediction than SVM. \label{input4}}
\end{table*}

In deep, the Holo-Hilbert transform is related with a new concept of cohomology sequence in extradimension of Kolmogorov space in time series data. Let $x(t)\in \mathbb{R}$ be
\begin{equation}
x(t)=\m{Re} \sum_{j=1}^{N}a_{j}\ee^{\ii 2\pi f_{j}t},  
\end{equation}

The expansion above is based on an adaptive IMF basis, so we have \cite{holo}
\begin{equation}
x(t)=  \sum_{j=1}^{N}c_{j}(t) = \m{Re} \sum_{j=1}^{N}a_{j}\ee^{\ii   \int_{t}  \omega_{j}(\tau)\dd\tau},  
\end{equation}

The algorithm (Fig.~\ref{flow}) to find a Holo-Hilbert spectrum is iterative over a few steps:
\begin{enumerate}
\item Take the absolute value of the $\ICHAIN$.
\item Identify all maxima of the absolute-valued function of $\ICHAIN$.
\item Construct the envelope by a natural spline through all the maxima.
\item Do $\ICHAIN$ transform over the envelope of layer from the previous step. 
\item Repeat step 1. again for the next layer of amplitude modulation process.
\end{enumerate}
The procedure above is applied separately to the amplitude modulation (AM) and the frequency modulation. The result of Holo-Hilbert amplitude hidden layer is obtained as an extradimension presentation which we can use to detect market crash and classify next state of prediction. The nested expression for the amplitude function has the form
\begin{equation}
a_{j}(t)=\sum_{k}[\m{Re}\sum_{l}a_{jkl}(t)\ee^{\ii \int_{t} \omega_{l}(\tau)\dd\tau} \cdots  ]\ee^{\ii   \int_{t}  \omega_{k}(\tau)d\tau}.
\end{equation} 
The Holo-Hilbert spectrum is a high-dimensional spectrum. The presentation is on the four layers AM spectra modulation for price prediction. 
 
\begin{table*}[!thb]
\renewcommand{\arraystretch}{1.3}
\centering
\begin{tabular}{llllrlr}
\toprule
\textbf{Rank} & \textbf{SET Symbol} & \textbf{Securities Name} & \textbf{SVM} & \textbf{Profit} & \textbf{SSM} & \textbf{Profit} \\
\colrule
1 &ADVANC &ADVANCED INFO SERVICE   &43.47\%&-5&71.73\%&225\\
2 &AOT &AIRPORTS OF THAILAND   &41.43\% &-10.9&51.44\%&30  \\
3 &BANPU &BANPU   &36.23\%&1.45 & 47.10\%&12.9 \\
4 &BAY &BANK OF AYUDHYA   &48.55\%&0.00&38.41\% & 8.0\\
5 &BBL &BANGKOK BANK   &37.68\% &-26.5&36.23\%&7.0\\
6 &BCP &THE BANGCHAK PETROLEUM   &44.20\% &3.5& 51.45\%&  27.75\\  
7 &BEC &BEC WORLD   &41.30\% &-2.5&42.02\%&9.0 \\
8 &BDMS &BANGKOK DUSIT MEDICAL SERVICES   &44.93\% & -0.6&60.14\%& 12.4\\
9 &BH &BUMRUNGRAD HOSPITAL   &44.43\% &-0.6& 64.49\% &255.0\\
10 &BIGC &BIG C SUPERCENTER   & 36.93\%&-29& 50.72\%&72.0 \\
11 &BTS &BTS GROUP HOLDINGS   &42.75\% &0.55&49.27\%&3.6\\  
12 &CPALL &CP ALL   &36.23\% &-15&48.55\%&20\\
13 &CPF &CHAROEN POKPHAND FOODS   &34.78\% &-1.6&45.65\%&6.6\\
14 &CPN &CENTRAL PATTANA   &  49.27\%  & 11.75& 49.27\%&13.75\\
15 &DELTA &DELTA ELECTRONICS (THAILAND)   &44.93\%&13.75&60.87\%&114.75\\
16 &EGCO &ELECTRICITY GENERATING   &43.48\% &6  &57.25\%&81\\
17 &GLOW &GLOW ENERGY   &46.37\% &6 &63.77\%&115.25 \\
18 &HMPRO &HOME PRODUCT CENTER   &40.58\% &-0.39&54.35\%&5.09\\
19 &IRPC &IRPC   &39.85\% &-1.18  &42.03\%&0.04\\
20 &KBANK &KASIKORNBANK   &36.95\%&-2&38.40\%&-70\\
21 &KKP &KIATNAKIN BANK   &39.13\% &0.75&42.03\%&35\\
22 &KTB &KRUNG THAI BANK   &47.10\% &7&51.45\%&8\\
23 &LH &LAND AND HOUSES   &42.05\% &0.8&45.65\%&3.45\\
24 &MAKRO &SIAM MAKRO   &31.88\% &1&36.23\%&13\\
25 &MINT &MINOR INTERNATIONAL   &51.43\%&13&56.24\%&24.93\\
26 &PS &PRUKSA REAL ESTATE   &40.58\% &0.25&63.77\%&35.95\\
27 &PTT &PTT   &42.75\% &52  &49.28\%&172.00\\
28 &PTTEP &PTT EXPLORATION AND PRODUCTION   &40.58\% &-34&35.51\%&-78\\  
29 &RATCH &RATCHABURI ELECTRICITY GENERATING HOLDING  &39.13\% &-1.5&50.00\%&2.6\\  
30 &ROBINS &ROBINSON DEPARTMENT STORE   &43.48\% &-2.25&48.55\%&11\\
31 &SCB &THE SIAM COMMERCIAL BANK   &49.27\% &47&28.26\%&-97\\
32 &SCC &THE SIAM CEMENT   &50.00\%   &16&55.07\%&18.40\\
33 &SCCC &SIAM CITY CEMENT   &43.48\%  &105&51.45\%&103\\
34 &SSI &SAHAVIRIYA STEEL INDUSTRIES   &18.00\% &-0.6&32.60\%&0.24\\
35 &STA &SRI TRANG AGRO-INDUSTRY   & 39.10\% & 3&47.10\%&8.6\\
36 &TCAP &THANACHART CAPITAL   &36.97\% &2&51.44\%&11.5\\
37 &THAI &THAI AIRWAYS INTERNATIONAL   &38.40\% &1.9&36.23\%&-1.8\\  
38 &TMB &TMB BANK   &37.68\% &0.3&52.89\%&2.22\\
39 &TOP &THAI OIL   &40.48\% &-8.25 &44.92\%&6.25 \\
40 &TPIPL &TPI POLENE   &47.83\%  &0.37&53.62\%&2.88\\ 
41 &TRUE &TRUE CORPORATION   &44.20\% &-0.6 &50.00\%&7.4 \\
42 &TU &THAI UNION GROUP   &39.13\%  & -0.53&63.00\%&25.8\\
\colrule
\multicolumn{3}{l}{Average performance and sum of profit} &38.55\% &158.91&44.21\%&1234\\
\botrule
\end{tabular}
\caption{Table shows 42 stocks in SET50 Index Futures of Thai stock market.	The training data with $\ICHAIN$ transformation have run in out of sample for 2000 trading days over close price between the dates 23/08/2007 to 29/10/2015. An out of sample test set for SVM with linear kernel has run over 138 days between 19/08/2014 to 10/03/2015. An out of sample test set for SSM with linear kernel has run over 138 days between 09/04/2015 to 02/11/2015. The better performance of SSM vs. SVM is evident from the empirical result of one day ahead directional forecasting.}
\label{42stock}
\end{table*}

\begin{figure*}[!t]
	\centering
	\includegraphics[width=.49\linewidth]{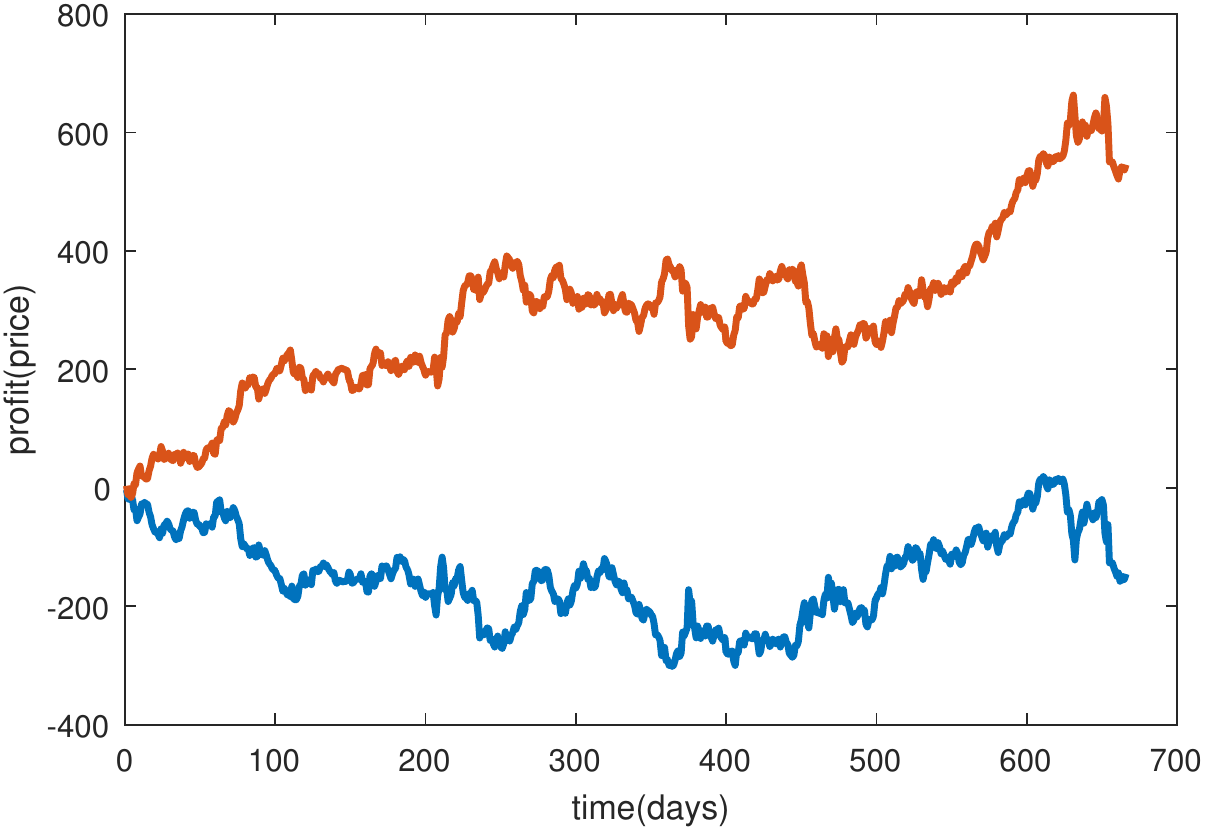}
	\includegraphics[width=.49\linewidth]{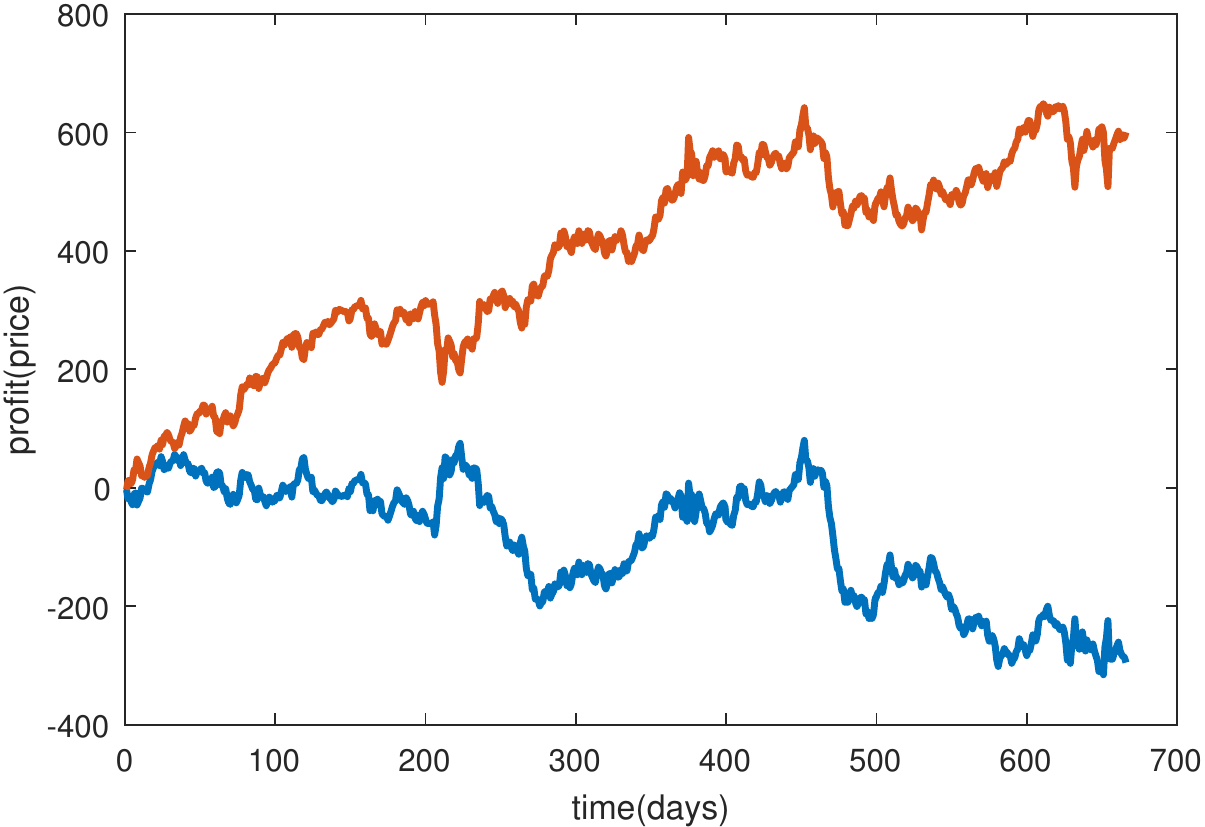}
	\caption{On the left: The comparison between  the performance of SVM (blue) and SVM with Hilbert transform of $\ICHAIN$ (red) with the linear kernel of SVM. The performance of  HT-SVM is $51.28\%$ with profit $485.86$, the performance of traditional SVM is $49.77\%$ with lost $-145.34$.
	On the right: The comparison between the performance of SSM (blue) and SSM with Hilbert transform of $\ICHAIN$ (red). We used rbs kernel(0.001) for HT-SSM, with the performance of directional prediction $55.19\%$ and the profit $606.84$. The blue line describes classical SVM with rbs kernel(0.05) with the performance of directional prediction just $47.67\%$ with the lost $-294$.
	For both cases the SET data are run out of sample in one day ahead directional prediction for 665 trading days starting from 26/6/2014 to 2/11/2016. 
	\label{result_thailand}}
\end{figure*}

\begin{figure*}[!t]
	\centering
	\includegraphics[width=.49\linewidth]{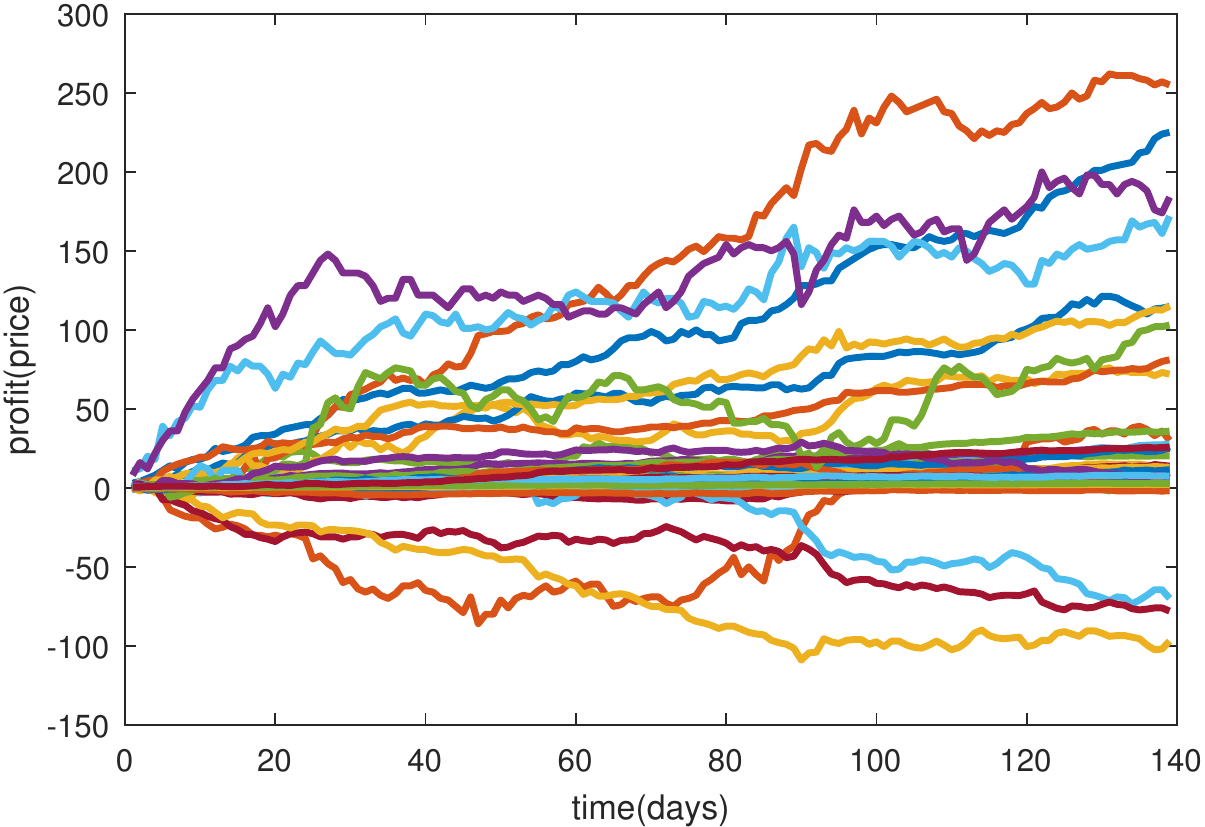}
	\includegraphics[width=.49\linewidth]{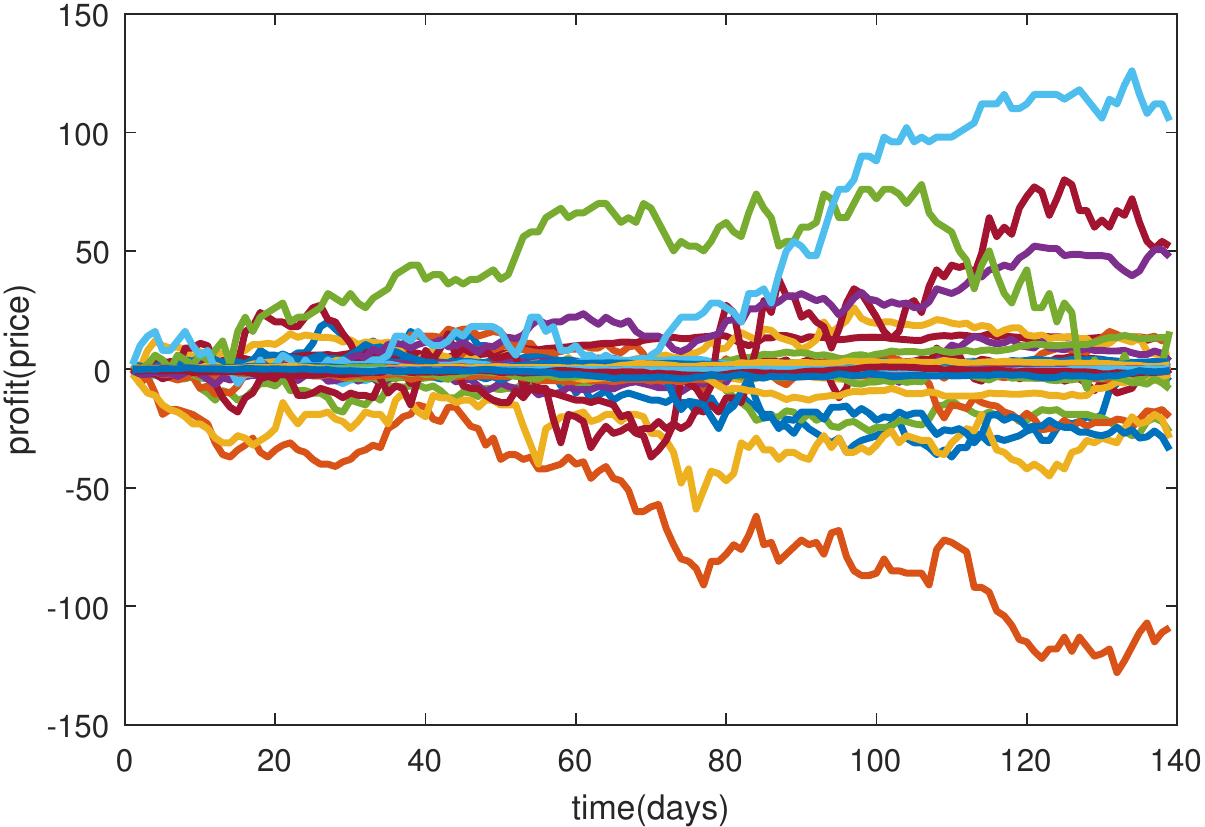}
	\caption{The profit of 42 stocks in SET50 Index Futures for SSM and SVM according Table~\ref{42stock}. The data run out of sample in one day ahead directional prediction for 138 days and trained with 2000 days starting from 23/08/2007 to 29/10/2015. On the left: The profit of one day ahead directional prediction  for SSM with a linear kernel and Hilbert transform of $\ICHAIN$.  The color lines are the earing prices after trading with the result of directional prediction. The average performance of SSM is just $44\%$ because of too short number of training days, but the sum of profit of within 138 days is 1234 points.
	On the right: The profit of one day ahead directional prediction for SVM with a linear kernel without $\ICHAIN$. \label{result_42stock}} 
\end{figure*}

\begin{figure}[!th]
	\centering
	\includegraphics[width=\linewidth]{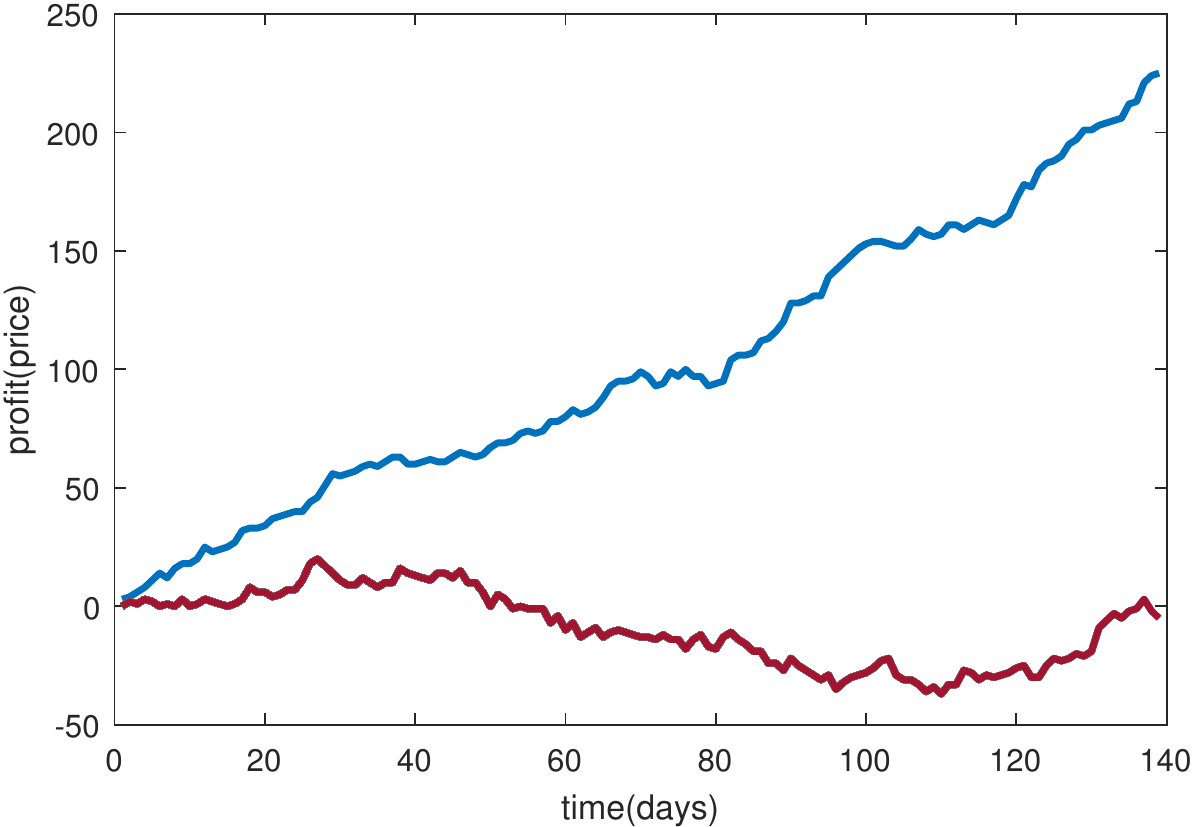}
	\caption{The profit of one day ahead directional prediction for the first SET50 stock, namely ADVANCE. SSM is a linear kernel of closed price with $\ICHAIN$ in blue line and without $\ICHAIN$ in red line. The data are run out of sample in directional prediction one day ahead within 138 days and trained with data from 23/08/2007 to 29/10/2015 with 2000 trading days.\label{result_single_stock}} 
\end{figure}

\subsection{Result of Empirical Analysis of SSM and Holo Hilbert Spectrum}

At first we compare the result of SSM with traditional SVM method. From the studies \cite{index2} follows that all stock indices around the world have similar statistical property, so called stylized fact \cite{stylized} of long memory process. The performance test is therefor done on six different financial time series data of stock market closed prices, see Table~\ref{input3}, which are the sample data for an analysis of country indices \cite{index} with SSM since all countries have the same rule for stock trading and the behaviors \cite{behavior} of traders among all countries are quite similar behaviors. In order to see the different result of performance between SVM and SSM we set the number of sample size to the same size as 31 days. The result of computation shows that SSM gets better performance than SVM, see Table~\ref{input}. The average performance of one day ahead of directional prediction for out of sample test of SVM is just $46.70\%$, but for SSM we get $57.38\%$. The improved performance from SVM to SSM is approximately $25.33\%$ for 31 days out of sample time series data. Next we can compare the performance of HT-SVM and HT-SSM methodology, see the results in Tables \ref{input2}, \ref{input4}. The HT-SVM is a Hilbert transform of $\ICHAIN$ followed by SVM procedure. Similarly, the HT-SSM means a Hilbert transform of $\ICHAIN$ folowed by SSM. The result of performance is tested again on six different types of time series data of stock closed prices. One can use different kernels and the same or longer date ranges than in the previous test. We confirm again that HT-SSM gets a better performance than HT-SVM.  We test the performance of directional prediction out of sample in 665 days of SET index. For interest, on our hardware configuration, it takes up to $3194.05$ seconds ($53.23$ min.) for one input sample point to train HT-SSM algorithm. Therefore within 665 days out of sample, we used computational time more than a month to show the result of analysis of SET index in this work. For SVM, the running time takes only $93.93$ seconds. The results of profit are shown in Fig.~\ref{result_thailand}, we can conclude that HT-SVM shows better performance than traditional SVM (also for the case of HT-SSM and SSM).

Next we test the different types of financial time series data. We choose a sample set of closed prices of 42 stocks underlying SET50 Index Futures as shown in Table~\ref{42stock}. The result of profit after directional prediction of SSM compared with SVM within 138 trading days are shown in Fig.~\ref{result_42stock}. The result for a profit of the first stock, namely ADVANC, is shown in Fig.~\ref{result_single_stock}. According to the empirical analysis of a sample set of closed prices, we found that for this particular data set of $42$ stocks, SSM shows better average performance in directional prediction of time series than SVM. The average performance of SSM in our empirical data set is 44.21\% in comparison with SVM which has only 38.55\%. The average profit of SSM $1234$ is also higher than SVM one with $158.91$ points. The general statements about SSM performance require detailed studies of more stock markets. Such studies are beyond the scope of this paper, however, the received results constitute a very useful ingredient for them. 

We check the performance of SSM with Hilbert transform of $\ICHAIN$ also with russian stock market RTS, which also confirm the conclusion that HT-SSM outperforms SVM method. The results are presented in Fig.~\ref{result_russia} with the performance of HT-SSM $54.93\%$ with the profit price $299.88$ in 142 trading days.

Finally, we implement the algorithm of SSM over Holo-Hilbert analysis of AM modulation of $\ICHAIN$ with SET index. The result of prediction is activated when time series data are in nonstationary state. It will generate signal of time series prediction two days ahead in directional prediction up or down with respect to the previous days. The empirical analysis with performance measuring is reported in Table~\ref{result_holo}.  The result of 100 days of 2nd layer Holo-Hilbert spectral FM mode of SET index and a performance test of comparison with HT-SSM are shown in Fig.~\ref{result}.

\begingroup
\squeezetable
\begin{table*}[!thb]
\renewcommand{\arraystretch}{1.3}
\centering
\begin{tabular}{lclllrll}
\toprule
\textbf{Date number} & \textbf{Number} & \textbf{Closed price} & \textbf{LH(2)-AM-SSM} & \textbf{Result} & \textbf{Profit} & \textbf{SM(rbs, $\mathbf{0.01}$)} & \textbf{State} \\
\colrule
9529&1&1276.84& up(forecast 9531)&correct&14.99&up(correct,+3.41)&\\ 
9530&2&1280.25& up(forecast 9532)&correct& 1.25 &up(correct,+14.99)&\\ 
9531&3&1295.24&up(forecast 9533)&wrong& -5.83&up(correct,+1.25)    &\\
9532&4&1296.49&up(forecast 9534)&correct&5.59&up(wrong,-5.83)&\\
9533&5&1290.66&up(forecast 9535)&correct&  17.81& up(correct,+5.59)&entanglement, [s4]\\
9534&6&1296.25&down(forecast 9536)&correct& 2.19&up(correct,+17.81)&\\
9535&7&1314.06&up(forecast 9537)&correct&  20.50&up(wrong,-2.19)&\\
9536&8& 1311.87&down(forecast 9538)&wrong&  -6.16 & up(correct,+20.50)&\\
9537&9& 1332.37&down(forecast 9539)&correct&  5.21&up(wrong,-6.16)&\\
9538&10& 1326.21&up(forecast 9540)&wrong&  -17.02&up(wrong, -5.21)    &normal, [s3]\\
9539&11&  1321.00&up(forecast 9541)&correct&0.23  & up(wrong,-17.02)&\\
9540&12&   1303.98&up(forecast 9542)&wrong&  -2.83&up(correct, 0.23)&\\
9541&13&  1304.21 &up(forecast 9543)&correct&2.50 &up(wrong, -2.83)&\\
9542&--&  1301.38 &no signal&-- &&&entanglement, [s2]\\
9543&14&1303.88&up(forecast 9545)&correct& 13.43& up(correct, 0.74)&\\
9544&15&1304.62&up(forecast 9546)&correct& 7.28& up(correct, 13.43)&\\
9545&16&1318.05&up(forecast 9547)& correct&13.88&up(correct,7.28)&\\
9546&17&1325.33 &up (forecast 9548)&correct&6.61&  up(correct,13.88)&\\
9547&18&1339.21 &up(forecast 9549)&correct&5.82&up(correct, 6.61) &\\
9548&19&1345.82 &down(forecast 9550)&wrong&-0.57&up(correct,5.82) &entanglement, s[1]\\
9549(--9558)&--&1351.64   &no signal&-- &&&\\
\colrule
Performance&&14/19&&71.68\% &84.88&\multicolumn{2}{c}{68.42\% (71.81), 13/19}\\
\colrule
9571&20&1392.01 &up(forecast 9573)&correct&2.68&up(wrong,-12.67)&\\ 
9572&--&1379.34&no signal&--&&&\\
9573&--&1382.02&no signal &--&&&\\
9574&21&1389.56& up(forecast 9576)&correct&12.68&up(wrong,-0.40)&\\
9575&22& 1389.16& up(forecast 9577)&correct&6.94 &up(correct,12.68)&\\
9576&--& 1401.84& no signal&--&&&\\
9577&23& 1408.78& up(forecast 9579)&correct&3.89 &up(correct,0.40)&\\
9578&24&  1409.18&up(forecast 9580) &correct&1.98& up(correct,  3.89)&\\
9579&25&  1413.07& up(forecast 9581)&correct&8.34& up(correct, 1.98)&\\
\colrule
Performance&&20/25&& 80\%&121.39&\multicolumn{2}{c}{68\% (77.69), 17/25}\\
\colrule
9580&26&  1415.05   & up(forecast 9582) &wrong&-0.72&up(correct,8.34)&\\
9581&--&  1423.39 & no signal&--&&&\\
9582&--&  1422.67 & no signal&--&&&entanglement, [s3]\\
9583&27&  1408.16 & up(forecast 9585) &correct&1.10& up(wrong,-14.51) &\\
9584(-9589)&& 1411.23   & no signal&--&&&\\
9590&28& 1379.02  &  up(forecast 9592) &wrong& -2.23&up(wrong,-1.65)&entanglement, [s2]\\
9591&29&  1377.37 & down(forecast 9593)&wrong& -20.89&up(wrong,-2.23)  &\\
9592&--& 1375.14  & no signal&--&&&entanglement, [s4]\\
9593&--& 1396.03  & no signal&--&&&\\
9594&30& 1395.21  & up(forecast 9596)&correct&5.37&up(correct,+10.05) &\\
9595&--&  1405.26 & no signal &--&&&\\
9596(-9602)&--&  1410.63 & no signal&--&&&\\
9603&31&  1402.79  &  up(forecast 9605)&correct&7.22&up(correct,5.72)&\\
\colrule
Performance&&23/31&& 74.19\%&111.24 &\multicolumn{2}{c}{64.51\% (83.41), 20/36}\\
\colrule
9604(--9606)&--&  1408.51  & no signal&&&&\\
9607&32&  1415.73  & up(forecast 9609)&correct&  3.76&up(wrong,-4.84)&\\
9608(--9610)& &   1449.40 & no signal&&&&\\
9611&33& 1457.30  & down(forecast 9613)&correct& 5.48&up(correct,11.89)&\\
9612&&  1469.19  & no signal&&&&\\
9613&& 1463.71 & no signal&&&&\\
9614&34& 1457.02 &  up(forecast 9616)&correct& 6.69&up(wrong,-1.00)&\\
9615&& 1456.02 & no signal&&&&\\
9616&35& 1471.85 & down(forecast 9618)  & correct& 19.66&up(wrong,-0.83)&\\
9617&& 1471.02  & no signal&&&&\\
9618&36& 1451.36 &  up (forecast 9620)& correct&5.38& up(correct,10.55)&\\
\colrule
Performance&&28/36&& 77.78\%&152& \multicolumn{2}{c}{60.00\% (172.45), 22/36}\\
\botrule
\end{tabular}
\caption{Table shows the performance for two day ahead directional prediction by using AM-mode over 2nd layer of Holo-Hilbert transform of $\ICHAIN$ prediction method of SSM. \label{result_holo}}
\end{table*}
\endgroup

\section{Discussion and Conclusion}\label{sec:conclusions}

\begin{figure}[!t]
	\centering
	\includegraphics[width=\linewidth]{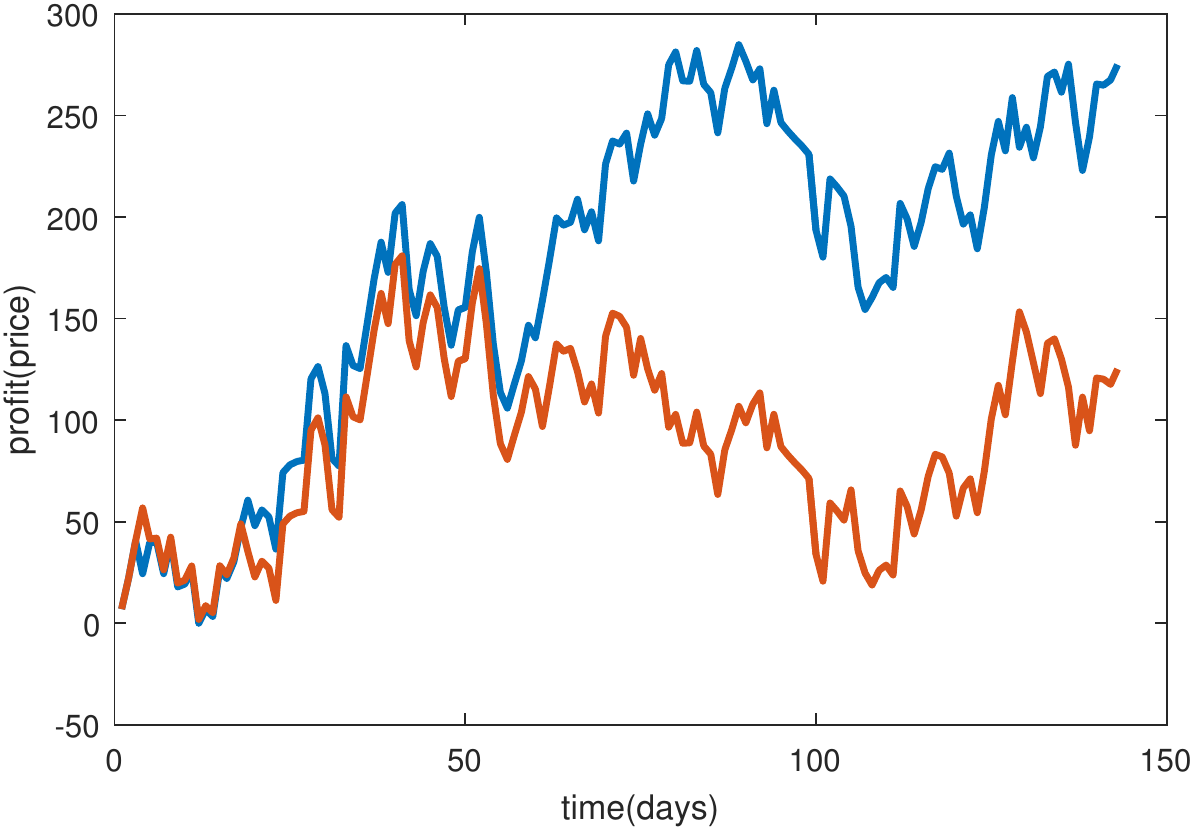}
	\caption{The profit of one day ahead directional prediction over out of sample 142 data set of closed price between 03/12/2015 to 01/07/2016 for Russian stock market. SSM with Hilbert transform HT-SSM (blue) and SVM with Hilbert transform HT-SVM (red) are with rbs kernel parameter set to $0.05$. The performance of HT-SSM is $54.93\%$ with profit price $299.88$ in 142 trading days. \label{result_russia}}
\end{figure}

We extend support vector machine to support spinor machine by using mathematical structure of spinor field and tensor product over vecor space. We define a support spinor from second cohomology group of moduli state space model in time series data.

We demonstrate the simple calculation of time series classification of SSM over $T_{0}$-separation axiom. We define a tensor field in financial time series data and obtain a spinor field from coupling states between them.
We finally prove that the separate hyperplane of SSM is separated by using degree with $\frac{\pi}{4}$.
We propose the equation of connection over Killing vector field in time series data. The solution of equation so called support Dirac machine is a weight of SSM. The equivalent class of weight of SSM can be written by integral over second cohomology group in time series data.
We prove the existence of $T_0$-separation criterion for SSM, different from $T_{2}$-separation of SVM in which two similar points of time series data can not be separated.

We report the empirical analysis performed with SSM and SVM, which confirms the better performance of SSM than SVM over six testing time series data from six different countries. We implement an algorithm of SSM over Holo-Hilbert analysis of AM. The result of extradimension of amplitude modulation confirms the existence of moduli state space model of time series data. We implement algorithm of SSM by using Holo-Hilbert amplitude modualtion mode for fully nonlinear and nonstationary time series data analysis. The result of analysis of hidden second layer can be used to predict market state of one day ahead with very high accuracy in comparison to typical SVM classification over state of end point in time series data. We compare the AM mode with Holo-Hilbert strategy, the AM mode strategy shows better results than HT-SVM and HT-SSM. It is also used to make two days ahead prediction with $71.68\%$ performance. The HT-SSM is just about $57\%$ and SVM is about $46.70\%$. The empirical analysis is done with out of sample test for all three strategies.

The SSM method promises a better performance than SVM. Although SVM is based on linear and nonliear classifier with stationary signal processing data, there is no evidence that SVM can be used for nonstationary signal processing data efficiently or not. The nonlinear and nonstationary signal processing data have more geometric general property than linear and stationary signal processing data for which SVM is widely use. There are many types of nonlinear and nonstationary signal processing data and financial time series data are only one type of them. Another types are data measured from stress tensor and tensor based visualizing image processing data from Video Share (VDO) streaming, by nature based on mathematical structure of tensor field and spinor field and thereafter more suitable for SSM than SVM. The algorithms based on SSM can replace SVM in cybernetics and electronics, it can be used for programming the robots and autonomic systems and completely replace SVM and neural networks.

\begin{figure*}[!t]
	\centering
	\includegraphics[height=.32\linewidth]{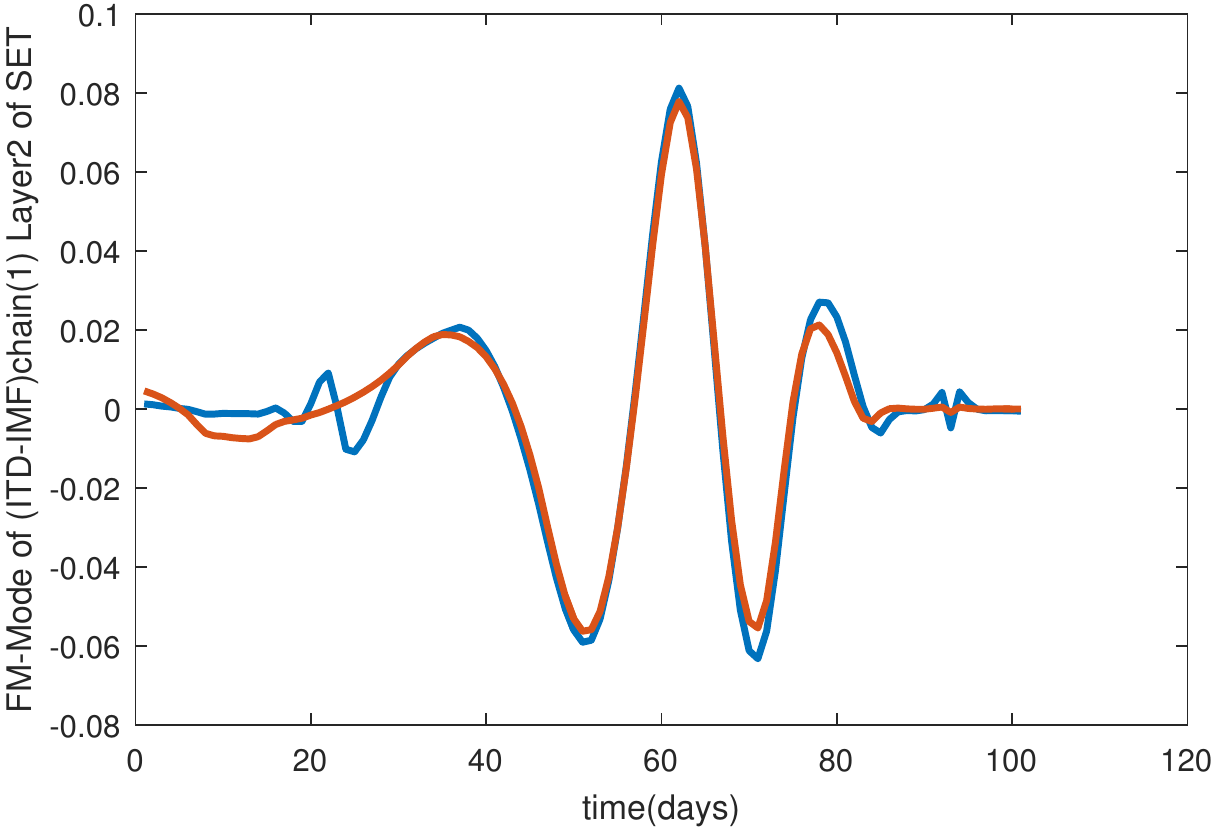}\hspace{2em}
	\includegraphics[height=.32\linewidth]{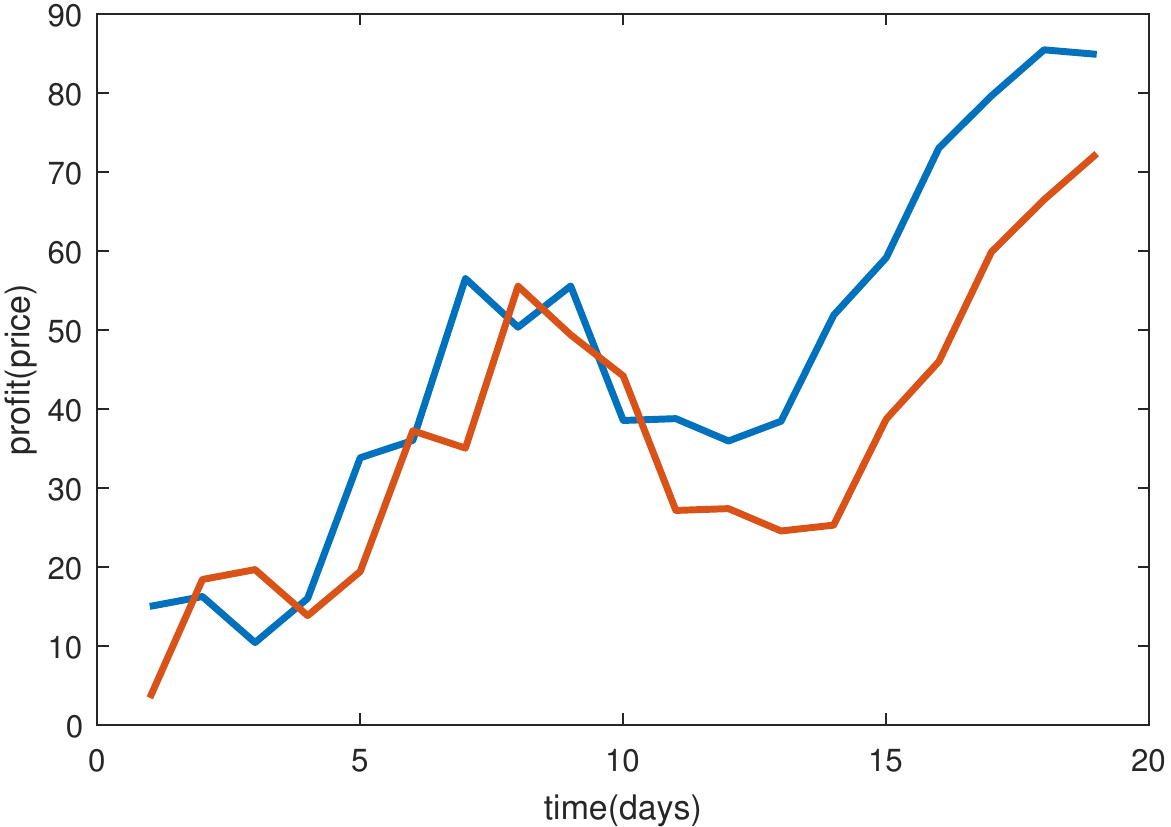}
	\caption{On the left: The FM mode of 2nd layer of $\ICHAIN$ for SET index in 9539 days (red) and in 9544 days (blue). The shape of frequency shows the stationary state of the last 100 days with normal state in up direction with two days ahead prediction of frequency. 
		On the right: The performance between two day ahead of Holo-Hilbert AM mode of 2nd layer with SSM price forecasting with highest profit $78.94\%$ within 19 days out of sample. The result of one day ahead of Holo-Hilbert AM mode of 2nd layer forecasting with highest profit also $78.94\%$ within 19 days out of sample. The HT-SSM has lower performance with $73.68\%$. All three strategies earn profit from the stock market. The highest profit is for two day ahead of Holo-Hilbert AM mode of 2nd layer with the value of $112$, HT-SSM can earn only $78.26$.\label{result}}
\end{figure*}

\section*{Acknowledgment}
K. Kanjamapornkul is supported by the scholarship from the 100th Anniversary Chulalongkorn University Fund for Doctoral Scholarship. The work was supported by Dean of Faculty of Engineering under the collaboration between Chulalongkorn University and SAS. The work was partly supported by the grants VEGA 2/009/16, 2/0153/17 and APVV-0463-12. R. Pinčák would like to thank the TH division in CERN for hospitality.


\end{document}